\theoremstyle{plain}
\newtheorem{theorem}{Theorem}[section]
\newtheorem{proposition}[theorem]{Proposition}
\newtheorem{lemma}[theorem]{Lemma}
\newtheorem{corollary}[theorem]{Corollary}
\theoremstyle{definition}
\newtheorem{definition}[theorem]{Definition}
\newtheorem{assumption}[theorem]{Assumption}
\theoremstyle{remark}
\newcommand{\E}{\mathbb{E}}
\newcommand{\R}{\mathbb{R}}
\icmltitlerunning{Informed Asymmetric Actor-Critic: Leveraging Privileged Signals Beyond Full-State Access}
\begin{document}
\twocolumn[
  \icmltitle{Informed Asymmetric Actor-Critic:\\ Leveraging Privileged Signals Beyond Full-State Access}

  \icmlsetsymbol{equal}{*}

  \begin{icmlauthorlist}
    \icmlauthor{Daniel Ebi}{kit}
    \icmlauthor{Damien Ernst}{uliege}
    \icmlauthor{Klemens Böhm}{kit}
    \icmlauthor{Gaspard Lambrechts}{mcgill,mila}
  \end{icmlauthorlist}

  \icmlaffiliation{kit}{Department of Computer Science, Karlsruhe Institute of Technology, Karlsruhe, Germany}
  \icmlaffiliation{uliege}{Montefiore Institute, University of Li\`ege, Li\`ege, Belgium}
  \icmlaffiliation{mcgill}{Department of Electrical and Computer Engineering, McGill University, Montreal, Canada}
  \icmlaffiliation{mila}{Mila - Qu\'ebec Artificial Intelligence Institute, Montreal, Canada}

  \icmlcorrespondingauthor{Daniel Ebi}{daniel.ebi@kit.edu}

  \icmlkeywords{Machine Learning, ICML, Asymmetric Reinforcement Learning, Asymmetric Actor-Critic, Partial Observability, POMDP, Privileged Information, Informativeness}

  \vskip 0.3in
]


\printAffiliationsAndNotice{}  

\begin{abstract}
 Asymmetric reinforcement learning leverages privileged information available during training to improve learning under partial observability. Existing asymmetric actor-critic methods typically assume access to the full environment state to condition the critic during training, which is often unrealistic in practice. We introduce the informed asymmetric actor-critic framework that allows the critic to be conditioned on arbitrary state-dependent privileged signals, and show that any such signal yields unbiased policy gradient estimates. This substantially expands the set of admissible privileged information and raises the problem of selecting the most informative signals for learning. To this end, we propose two novel informativeness criteria: a dependence-based test that can be applied prior to training, and a test based on improvements in value prediction that can be applied post hoc. Experiments on partially observable benchmarks and synthetic environments demonstrate that carefully selected privileged signals can match or outperform full-state asymmetric baselines while relying on strictly less state information.
\end{abstract}

\section{Introduction}
Reinforcement learning (RL) has emerged as a powerful tool for optimizing control policies in various domains, including heating, ventilation, and air conditioning systems~\citep{Sayed.2024}, energy systems~\citep{Francois.2016, Ebi.2024}, autonomous driving~\citep{Sallab.2017}, and robotics~\citep{Tang.2025}. However, many real-world applications involve partial observability, where agents must make decisions based on incomplete and noisy observations. This setting is formalized by partially observable Markov decision processes (POMDPs) \citep{Kaelbling.1998}, where optimal actions depend on the history of past observations and actions.

To address this, RL methods for fully observable settings have been adapted by learning history-dependent policies, typically using recurrent neural networks (RNNs) that encode observation-action sequences~\citep{Bakker.2001, Wierstra.2010, Hausknecht.2015, Zhang.2016,  Zhu.2017, Cayci.2024}. Many approaches compress these sequences into compact latent representations by introducing auxiliary learning objectives~\citep{Igl.2018, Buesing.2018, Guo.2018, Gregor.2019, Han.2019, Guo.2020,  Lee.2020, Subramanian.2022, Ni.2024}. While these methods can, in principle, learn optimal policies, they assume the same level of observability during training and execution, restricting policy learning to the limited information available at deployment. Yet, in practice, this assumption is unnecessarily restrictive and possibly suboptimal. Many training environments provide additional information that is unavailable or impractical to access during execution, including multi-view sensory input in robotics~\citep{Pinto.2017}, simulator states available during training in simulation, expert policies, and external knowledge obtained from foundation models by querying representations conditioned on observations and eventual additional context. 
These signals do not necessarily correspond to the full state or satisfy the Markov property, yet they can provide useful information for improving learning.

Several approaches have explored the use of privileged information. For example, some methods train privileged expert policies conditioned on the true state and then imitate them~\citep{Choudhury.2018}. However, these methods often lack theoretical guarantees and may lead to suboptimal policies in POMDPs~\citep{Warrington.2021}. To mitigate this, \citet{Warrington.2021} propose constraining the expert policy to yield an optimal policy under partial observability. Another line of work exploits privileged information in model-based RL by constructing world models that summarize by integrating additional state signals. Examples include the Informed Dreamer~\citep{Lambrechts.2024}, the Wasserstein Believer~\citep{Avalos.2024}, and the Scaffolder~\citep{Hu.2024}.

Finally, asymmetric actor-critic methods offer a promising approach for leveraging privileged information by conditioning the actor on observable histories and the critic on privileged signals during training~\citep{Degrave.2022, Kaufmann.2023, Vasco.2024, Duerr.2026}. Performance improvements arise from better value estimation rather than richer actor inputs, as the actor does not access privileged signals. 
\citet{Pinto.2017} introduce an early asymmetric actor-critic method with a state-conditioned critic that achieves strong empirical performance. However, this formulation is generally ill-defined in POMDPs unless additional assumptions hold, such as state-decodability, where the belief over latent states collapses to a Dirac distribution given the history~\citep{Baisero.2021}. \citet{Baisero.2021} address this issue by introducing the history-state value function to ensure unbiased gradients. Theoretical work has further established convergence guarantees for policy gradient and actor-critic methods in both fully and partially observable settings, including symmetric recurrent natural actor-critic methods using RNNs~\citep{Wang.2020, Agarwal.2021, Cayci.2024} and asymmetric settings with fixed agent state and linear function approximators~\citep{ Lambrechts.2025} or tabular belief-weighted formulations~\citep{Cai.2024}.
However, existing asymmetric actor-critic approaches often assume full-state access, leaving the case of privileged partial signals underexplored. A related line of work in causal decision-making and bandit settings studies when optimal policies depend only on subsets of variables~\citep{Lee.2018, Lu.2020, Lee.2020b}. These approaches typically rely on an explicit causal model to identify irrelevant variables. In contrast, we do not assume access to such a structure and instead study privileged signals from a statistical perspective focused on their utility for value estimation.

In this work, we generalize asymmetric actor-critic methods by introducing the informed asymmetric actor-critic framework, which allows the critic to be conditioned on arbitrary state-dependent privileged signals without requiring full-state access. We show that such state-conditioned signals can be leveraged while preserving unbiased policy gradients. This substantially broadens the class of admissible privileged training-time signals and extends the scope of asymmetric actor-critics, but also raises a natural question: if any state-conditioned signal is admissible, which ones should be selected?

Since there is no inherent criterion for preferring one admissible signal over another, the selection and evaluation of useful signals becomes a central challenge in asymmetric RL. We provide a first systematic answer to this question by introducing two informativeness criteria that assess the utility of additional signals w.\@r.\@t.\@ value estimation: (i) a residual-based dependency measure, and (ii) a value-prediction-based measure. We empirically evaluate our framework on benchmark environments and synthetic informed POMDPs. Our results show that leveraging informative privileged signals can significantly improve policy learning, challenging the assumption that full-state access is essential for effective asymmetric actor-critics. This work suggests new directions for studying how privileged information can be selected, evaluated, and integrated to facilitate learning in partially observable settings.
\section{Background}
In this section, we formally introduce partially observable Markov decision processes (POMDPs) and the symmetric actor-critic paradigm, and then describe the informed POMDP framework that motivates our informed asymmetric actor-critic framework.

\subsection{Partially Observable Markov Decision Processes}\label{sec:POMDP}
A POMDP~\citep{Kaelbling.1998} models sequential decision-making under uncertainty as tuple $(\mathcal{S}, \mathcal{A}, \mathcal{O}, T, O, R, P, \gamma)$, where $\mathcal{S}$, $\mathcal{A}$, and $\mathcal{O}$ denote the state, action, and observation spaces. The transition probabilities $T(s_{t+1} \vert s_{t}, a_{t})$ describe the process dynamics. The environment emits observations via $O(o_{t} \vert s_{t})$  at time~$t$ and selects actions $a_t$ based on the observable history $h_{t}$, defined as the sequence of past observations and actions, including the current observation. Specifically, the history can be written as $h_t = (o_0, a_0, \dots, o_t)$. We define the set of observable histories as $\mathcal{H} = \bigcup_{t=0}^\infty \mathcal{H}_t$, where $\mathcal{H}_t \subseteq \mathcal{O} \times (\mathcal{A} \times \mathcal{O})^t$ is the set of histories of size~$t$. The immediate reward of the agent is given by $r_t := R(s_t, a_t)$, with $R\colon \mathcal{S} \times \mathcal{A} \to \mathbb{R}$. We assume that there exists a constant $r_{\max} > 0$ such that $|r_t| \le r_{\max}$ for all $t$. $P$ specifies the initial state distribution. The objective is to maximize the expected return $J(\pi) = \E^\pi \left[\sum_{t=0}^{\infty} \gamma^t R(s_t, a_t) \right]$, 
where $\pi(a_t \vert h_t)$ denotes a history-dependent policy and $\gamma \in [0, 1)$ is a discount factor. The future return from time $t$ is defined as $G_t~=~\sum_{k=0}^\infty~\gamma^k R(s_{t+k}, a_{t+k})$, which leads to the history-based $Q$-function, the expected return given the current history and action,
\begin{equation*}
Q^\pi(h_t, a_t) = \E^{\pi}_{s_{t:\infty}, a_{t+1:\infty}} \big[G_t \vert h_t, a_t \big],
\end{equation*}
and the corresponding history value function
\begin{equation*}
V^\pi(h_t)~=~\sum_{a_t \in \mathcal{A}} \pi(a_t \vert h_t) \ Q^\pi(h_t, a_t).
\end{equation*}

\subsection{Symmetric Actor-Critic Paradigm}
Given a history-dependent policy $\pi_\theta$ parametrized by $\theta$ (actor), the policy gradient in a POMDP~\citep{Sutton.1999, Wierstra.2007} is given by
\begin{equation} \label{eq:standard_policy_gradient}
\nabla_\theta J(\pi_{\theta}) = \E \bigg[ \sum_t \gamma^t Q^\pi(h_t, a_t) \nabla_\theta \log \pi_{\theta}(a_t \vert h_t) \bigg].
\end{equation}
In practice, this expectation is estimated via Monte Carlo by sampling histories and actions in the POMDP, which is known to yield high-variance estimates. It has been shown that subtracting any function of the history, referred to as a baseline, from the $Q$-function does not bias the gradient while potentially reducing variance. Commonly, the value function $V^\pi(h_t)$ is used as this baseline, yielding the advantage formulation
\begin{equation} \label{eq:advantage_policy_gradient}
\nabla_\theta J(\pi_{\theta}) = \E \bigg[ \sum_t \gamma^t A^\pi(h_t, a_t) \nabla_\theta \log \pi_{\theta}(a_t \vert h_t) \bigg],
\end{equation}
where $A^\pi(h_t, a_t)~=~Q^\pi(h_t, a_t)~-~V^\pi(h_t)$ is called the advantage function.
Unfortunately, the exact forms in Equations~\ref{eq:standard_policy_gradient}-\ref{eq:advantage_policy_gradient} involve the unknown functions $Q^\pi(\cdot)$ and $V^\pi(\cdot)$. The $Q$-function $Q^\pi(h_t, a_t)$ can in principle be estimated using a single Monte Carlo sample of $G_t$, but $V^\pi(h_t)$ cannot because the sampled
$G_t$ is not conditionally independent of $a_t$ given $h_t$. Practitioners typically approximate $Q^\pi(\cdot)$ or $V^\pi(\cdot)$ with parametric functions $Q^\pi_\vartheta$ or $V^\pi_\vartheta$ (critic), using temporal difference (TD) learning. Since a baseline requires $V^\pi(\cdot)$, many algorithms avoid estimating $Q^\pi$ directly and instead rely on an advantage estimator that uses only the critic, $\hat{A}_t~=~r_t~+~\gamma~V^\pi_\vartheta(h_{t+1})~-~V^\pi_\vartheta(h_{t})$, where $r_t$ is the observed reward at time $t$. This estimator of the advantage is unbiased if $V^\pi_\vartheta$ perfectly approximates $V^\pi$.

\subsection{Informed POMDPs}
Modeling the training-execution asymmetry, in which the critic accesses additional state-dependent signals during training, but the policy relies only on observation-action histories, requires a framework that formalizes these signals without altering execution-time dynamics. The informed POMDP formalism \citep{Lambrechts.2024} provides a principled approach by augmenting the standard POMDP with an information variable. Specifically, the informed POMDP introduces an information space $\mathcal{I}$ and an information function $I\colon \mathcal{S} \to \Delta(\mathcal{I})$, which gives the probability to obtain information $i_t \in \mathcal{I}$ in the true state $s_t \in \mathcal{S}$. Hence, the informed POMDP is defined by the 10-tuple
$
(\mathcal{S}, \mathcal{A}, \mathcal{I}, \mathcal{O}, T, I, \widetilde{O}, R, P, \gamma)
$.
In contrast to the standard POMDP, the observation function is defined as $\widetilde{O}\colon \mathcal{I} \to \Delta(\mathcal{O})$ and denotes the probability to obtain $o_t \in \mathcal{O}$ given $i_t \in \mathcal{I}$. The defining assumption of the informed POMDP is that the observation $o_t$ is conditionally independent of the true state $s_t$ given the information $i_t$, i.e.,
$p(o_t \vert i_t, s_t) = \widetilde{O}(o_t \vert i_t)$.\footnote{This formulation is not restrictive. Suppose a POMDP with observation $o_t$ and auxiliary observation $o_t^+$ such that \mbox{$(o_t, o_t^+)\sim O^+(o_t, o_t^+ \vert s_t)$}. Then, one can always define an information $i_t = (o_t, o^+_t)$ such that
\mbox{$p(i_t, o_t \vert s_t) = I(i_t \vert s_t)\,\widetilde{O}(o_t \vert i_t)$} holds.} 
Each informed POMDP induces an underlying execution POMDP defined as
$(\mathcal{S}, \mathcal{A}, \mathcal{O}, T, O, R, P, \gamma)$,
with observation function
$
O(o_t \vert s_t) = \E_{i_t \vert s_t} \big[\widetilde{O}(o_t \vert i_t)\big]
$,
which governs the agent's interaction at deployment.

Thus, the conditional-independence assumption offers a convenient modeling abstraction for introducing an information variable that is at least as informative about the underlying state as the execution-time observation, while preserving full generality. As a result, the informed POMDP framework naturally subsumes both standard POMDPs and settings with additional training-time signals, making it particularly well-suited for modeling asymmetric learning scenarios.
\section{Informed Asymmetric Actor-Critic}
In this section, we develop an asymmetric actor-critic framework that leverages arbitrary state-conditioned information during training. Building on the informed POMDP paradigm, we introduce informed history-based $Q$- and value functions and derive an unbiased policy gradient estimator. Our formulation generalizes the history-state critic of \citet{Baisero.2021}: instead of requiring full-state access, we show that any privileged signal $i_t \sim I(i_t \vert s_t)$ suffices to obtain unbiased value estimates, without imposing additional assumptions on the POMDP. This expands the set of usable training-time signals in asymmetric actor-critics, motivating the question of how to select among them.

\subsection{Informed History-based Value Functions}
Given an informed POMDP, we define the following informed history-based reward
\begin{equation}\label{eq:informed_reward_function}
    R(h_t, i_t, a_t) = \E_{s_t} \big[ R(s_t,a_t)  \vert h_t, i_t \big].
\end{equation}
This reward is defined using additional state-conditioned information available during training while remaining unbiased w.\@r.\@t.\@ the standard history-based reward when taking expectations over $p(i_t \vert h_t)$ (cf. Lemma~\ref{lemma:unbiasedness_informed_reward_function} in Appendix~\ref{app:unbiasedness}).

\textbf{Informed history $Q$-function.}
We now introduce the informed history $Q$-function, which extends the history-state critic of \citet{Baisero.2021} to arbitrary state-conditioned privileged signals:
\begin{equation*} \label{eq:informed_q_function}
    Q^{\pi}(h_t, i_t, a_t) = \E^{\pi}_{s_{t:\infty}, a_{t+1:\infty}} \big[G_t  \vert h_t, i_t, a_t\big].
\end{equation*}
This $Q$-function is an unbiased estimator of the standard history-based counterpart (cf. Lemma~\ref{lemma:unbiasedness_informed_asymmetric_q_function}),  i.e., $\E_{i_t \vert h_t}~\big[Q^{\pi}(h_t, i_t, a_t)\big]~=~Q^{\pi}(h_t, a_t)$.

\paragraph{Informed history value function.}
Based on the proposed informed history $Q$-function, we can define the informed history value function: 
 \begin{equation} \label{eq:informed_history_value_function_base_case}
    V^\pi(h_t, i_t) = \E^\pi_{s_{t:\infty}, a_{t+1:\infty}} \big[G_t \vert h_t, i_t \big],
\end{equation}
which admits the time-invariant policy-based decomposition
  \begin{equation} \label{eq:informed_history_value_function}
    V^\pi(h, i) = \sum_{a \in \mathcal{A}} \pi(a \vert h) \, Q^\pi(h, i, a).
    \end{equation}
The corresponding Bellman equation is
\begin{equation} \label{eq:informed_history_q_function_bellman}
    Q^\pi(h, i, a) = R(h, i, a) + \gamma \, \E_{o^{\prime}, i^{\prime} \vert h, i, a} \big[ V^\pi(h^{\prime}, i^{\prime})\big],
\end{equation}
where $ h^{\prime} = h a o^{\prime}$.
Analogously to the $Q$-function, the informed history value function is an unbiased estimator of the standard history value (cf.  Lemma~\ref{lemma:unbiasedness_informed_history_value_function}), i.e., 
$\E_{i_t \vert h_t}~\big[V^\pi(h_t, i_t)\big]~=~V^\pi(h_t)$.

In our setting, the privileged signal $i_t$ provides additional state-dependent context that can reduce the ambiguity about the underlying state given a fixed history. Formally, letting $H(s_t \vert h_t)$ denote the conditional entropy of the state, standard information-theoretic results imply 
$E_{i_t \vert h_t}~\left[H(s_t \vert h_t, i_t)\right]~\leq~H(s_t \vert h_t)$, 
so conditioning on $i_t$ never increases, and on average reduces, uncertainty about the true state. This does not guarantee that $V^\pi(h_t, i_t)$ is inherently easier to approximate than $V^\pi(h_t)$; rather, it is the conditional expectation of a return random variable whose variance may be lower. In particular, by the law of total variance applied to the return $G_t$, $E_{i_t \vert h_t}~\left[\text{Var}(G_t \vert h_t, i_t)\right]~\leq~\text{Var}(G_t \vert h_t)$, so an appropriate choice of $i_t$ can reduce the variance of value targets. It mainly occurs in environments with high ``value aliasing'', where distinct states yielding different returns may exist under the same observable history \citep{Lambrechts.2025}. In such cases, $i_t$ can help disambiguate the state, providing a cleaner target for the critic.

When the privileged signal coincides with the full state, $i_t = s_t$, our formulation reduces to the history-state value function of \citet{Baisero.2021} (cf. Corollary~\ref{corollary:reduction_history-state_value_function} in Appendix~\ref{app:auxiliary_results}), showing that their framework is a special case of our more general formalization.

The careful reader will have noticed that all definitions and unbiasedness results remain valid when replacing $i_t$ with an arbitrary auxiliary observation $o_t^+$. Indeed, since $h_t$ contains $o_t$ by definition, defining $\tilde{i}_t = (o_t, o_t^+)$ implies that conditioning on $(h_t, o_t^+)$ is equivalent to conditioning on $(h_t, \tilde{i}_t)$, so all results hold.

\subsection{Informed Asymmetric Policy Gradient}
Using the informed history-based $Q$-function, we define the informed asymmetric policy gradient
\begin{equation*}
    \nabla_\theta^{\text{IAAC}} J(\pi_{\theta}) = \E\bigg[\sum_{t=0}^\infty \gamma^t Q^\pi(h_t, i_t, a_t) \nabla_\theta \log \pi_{\theta}(a_t \vert h_t)\bigg],
\end{equation*}
where the policy $\pi_\theta$ depends only on the history $h_t$, while the critic may additionally condition on training-time information $i_t$.

Incorporating privileged state-conditioned information into the critic does not bias the policy gradient: for any choice of $i_t$, the informed asymmetric policy gradient coincides with the standard policy gradient.

\begin{theorem}[Informed asymmetric policy gradient] \label{th:informed_asymmetric_policy_gradient}
Given an informed POMDP, the informed asymmetric policy gradient is equivalent to the standard policy gradient. Formally,
\begin{equation*}
    \nabla_\theta^{\text{IAAC}} J(\pi_{\theta}) = \nabla_\theta J(\pi_{\theta}).
\end{equation*}
\end{theorem}
\begin{proof}
Given Equation \ref{eq:standard_policy_gradient} and Lemma~\ref{lemma:unbiasedness_informed_asymmetric_q_function}, we have
    \allowdisplaybreaks
    \begin{align*}
            \nabla_\theta J(\pi_{\theta}) 
            & = \E \bigg[\sum_{t}^{} \gamma^t \ Q^{\pi}(h_t, a_t) \ \nabla_{\theta} \underbrace{\log \pi_{\theta}(a_t \vert h_t)}_{=: \phi^{\pi_\theta}_t}\bigg]\\
            & \overset{\text{(a)}}{=} \sum_{t}^{} \gamma^t \ \E_{h_t, a_t} \Big[Q^{\pi}(h_t, a_t) \ \nabla_{\theta} \phi^{\pi_\theta}_t\Big]\\
            & \overset{\text{(b)}}{=} \sum_{t}^{} \gamma^t \ \E_{h_t, a_t}\bigg[\E_{i_t} \Big[Q^{\pi}(h_t, i_t, a_t) \vert h_t \Big] \nabla_{\theta} \phi^{\pi_\theta}_t\bigg] \\
            & \overset{\text{(c)}}{=} \sum_{t}^{} \gamma^t \ \E_{h_t, i_t, a_t} \Big[Q^{\pi}(h_t, i_t, a_t) \ \nabla_{\theta} \phi^{\pi_\theta}_t\Big]\\
            & \overset{\text{(d)}}{=} \E \bigg[\sum_{t}^{} \gamma^t \ Q^{\pi}(h_t, i_t, a_t) \ \nabla_{\theta} \log \pi_{\theta}(a_t \vert h_t)\bigg]\\
            & = \nabla_\theta^{\text{IAAC}} J(\pi_{\theta}).\\
    \end{align*}
    Here, (a) and (d) follow from linearity of expectation and the assumption that the infinite discounted series is absolutely integrable, allowing interchange of expectation and summation; (b) uses Lemma~\ref{lemma:unbiasedness_informed_asymmetric_q_function}, and (c) applies the law of total expectation.
    This concludes the proof.
\end{proof}
This result generalizes prior work on asymmetric actor-critic methods beyond state-based critics and recovers the asymmetric policy gradient of \citet{Baisero.2021} as a special case when $i_t = s_t$ (cf. Corollary~\ref{corollary:reduction_asymmetric_policy_gradient}). Moreover, it extends straightforwardly to value functions that take an arbitrary auxiliary observation $o_t^+$ in place of $i_t$ as input.

Hence, Theorem~\ref{th:informed_asymmetric_policy_gradient} provides a theoretical justification for using arbitrary state-conditioned privileged signals during training. It shows that, under standard policy-gradient assumptions, symmetric and informed actor-critic methods optimize the same objective $J(\pi_\theta)$ and share the same first-order stationary condition $\nabla_\theta J(\pi_\theta)=0$. Signal choice thus mainly affects optimization through changes in the variance of the value estimates, rather than asymptotic optimality.

In particular, our framework supports using a state-conditioned expert policy as a privileged signal, i.e., $i_t~=~a_t^\star \sim \pi^\star(\cdot \vert s_t)$, relating to ideas in asymmetric imitation learning but without introducing gradient bias. Indeed, $a_t^\star$ depends on the environment state $s_t$, and Theorem~\ref{th:informed_asymmetric_policy_gradient} guarantees unbiased policy gradients for the history-dependent actor, even when the expert policy itself is not deployable under partial observability. This enables the critic to exploit oracle information for value estimation without requiring the actor to perform direct imitation, which was shown to be suboptimal in asymmetric settings~\citep{Warrington.2021}.

\paragraph{Informed history critic.}
As in symmetric actor-critic methods, we learn a value function using TD learning, and we use the TD error to form a low-variance advantage estimate $\hat{A}^{\text{IAAC}}_{t}$. The informed history critic $\hat{V}\colon~\mathcal{H}~\times~\mathcal{I}~\to~\R$ additionally receives a privileged signal and approximates the informed history value $V^{\pi}(h_t, i_t)$.
Combined with a history-dependent actor $\hat{\pi}_\theta(a_t \vert h_t)$, this yields the \emph{informed asymmetric actor-critic (IAAC)}. 
The resulting policy gradient estimator is
\begin{equation*}
    \hat{\nabla}_\theta^{\text{IAAC}} J(\pi_{\theta})~=~ \E \bigg[\sum_{t}^{} \gamma^t \ \hat{A}^{\text{IAAC}}_{t} \ \nabla_{\theta} \log \hat{\pi}_{\theta}(a_t \vert h_t)\bigg].
\end{equation*}
\section{Informativeness of Privileged Signals}
While the informed asymmetric actor-critic framework guarantees unbiased policy gradients for any state-conditioned privileged signals, not all such signals are equally useful for learning in practice. The choice of $i_t$ can affect the critic's learning efficiency and stability, motivating the need for criteria to select the most informative signals.

Here, we focus on the informativeness of $i_t \in \mathcal{I}$ from the perspective of predicting future returns, which is the critic’s primary role. To this end, we introduce two criteria that quantify the informativeness of privileged signals and enable systematic comparison across different choices of $i_t$.

\subsection{Residual Informativeness}
The utility of a privileged signal $i_t$ in an informed critic depends on whether it provides information about future returns beyond what is already encoded in the observable history-action pair $(h_t, a_t)$. At the population level, this corresponds to the conditional independence (CI) hypothesis
$\mathbb{H}_0^{\text{CI}}: G_t \perp\!\!\!\perp i_t \vert h_t, a_t$,
which factorizes the joint distribution as 
$p(G_t, i_t, h_t, a_t) = p(h_t, a_t) p(G_t \vert h_t, a_t) p(i_t \vert h_t, a_t)$. Rejecting $\mathbb{H}_0^{\text{CI}}$ indicates that $i_t$ provides additional, non-redundant information about future returns.

One typically requires samples from both $\mathbb{H}_0^{\text{CI}}$ and $\mathbb{H}_1^{\text{CI}}$ to perform this test. Unfortunately, directly obtaining samples following $\mathbb{H}_0^{\text{CI}}$ is generally infeasible. Instead, interaction with the environment yields samples from the unknown joint distribution $p(G_t, i_t, h_t, a_t)=p(h_t, a_t)p(G_t, i_t \vert h_t, a_t)$, where CI cannot be assumed. Sampling independently from $p(G_t \vert h_t, a_t)$ would require learning this conditional distribution, which is challenging for high-dimensional histories, and risks introducing noise into any direct test.

To circumvent this, we first remove the predictable components of $G_t$ and $i_t$ explained by $(h_t, a_t)$ and test for dependence in the residuals. As we explain below, testing residual independence provides insight into the independence of the original random variables, while simultaneously allowing sampling from a tractable approximation of $p(G_t \vert h_t, a_t)$.

\begin{definition}[$\alpha$-residual informativeness]
Let $G_t$, $i_t$, $h_t$, and $a_t$ be random variables with finite second moments. Define the conditional-mean residuals
\begin{equation*}
e_{G_t} := G_t - \E[G_t \vert h_t, a_t], \qquad
e_{i_t} := i_t - \E[i_t \vert h_t, a_t].
\end{equation*}  
A privileged signal $i_t$ is \emph{$\alpha$-residual-informative} if a statistical test of independence rejects the null hypothesis
\begin{equation*}
\mathbb{H}_0^{\text{res}}: e_{G_t} \perp\!\!\!\perp e_{i_t},
\end{equation*}
in favor of the alternative $\mathbb{H}^{\text{res}}_1: e_{G_t} \not\!\perp\!\!\!\perp e_{i_t}$, at significance level $\alpha > 0$.
\end{definition}
CI implies residual independence, but the converse need not hold in general~\citep{Zhang.2023}. Hence, the residual test provides a necessary, but not sufficient, condition for CI, which aligns with our goal of detecting whether $i_t$ carries additional predictive information about $G_t$ beyond $(h_t,a_t)$. This reduces the problem to testing whether the unexplained components of $G_t$ and $i_t$ remain statistically dependent.

Testing $\mathbb{H}_0^{\text{res}}$ ideally requires samples from
\begin{equation*}
p(e_{G_t}, e_{i_t}, h_t, a_t) = p(e_{G_t} \vert h_t, a_t)\, p(e_{i_t} \vert h_t, a_t)\, p(h_t, a_t),
\end{equation*}
which, as with the original random variables, is generally intractable. 
To approximate a sample of $G_t$ independent of $i_t$ but with the correct conditional mean, we construct a surrogate $\tilde{G}_t^{\text{null}}$ in two steps: (i) we an independent sample $\tilde{G}_t$ from the marginal distribution $p(G_t)$ of $G_t$, which is easily feasible from the collected episodes; (ii) we shift these samples to preserve the estimated conditional mean $\E[G_t \vert h_t, a_t]$. Thus, 
$\tilde{G}_t^{\text{null}} = \tilde{G}_t - \E[\tilde{G}_t] + \E[G_t \vert h_t, a_t]$.

By construction, the distribution of this sample exactly matches the first moment of the target distribution $p(G_t \vert h_t, a_t)$, while its higher-order moments follow those of the marginal distribution $p(G_t)$.

The corresponding null residual 
\begin{equation*}
    e_{\tilde{G}_t}^{\text{null}} := \tilde{G}^{\text{null}}_t - \E[G_t \vert h_t, a_t] = \tilde{G}_t - \E[\tilde{G}_t]
\end{equation*}
is independent of $e_{i_t}$, and thus satisfies $\mathbb{H}^{\text{res}}_0$. Further details on the derivation of the null residual are given in Appendix~\ref{app:analysis_residual_test}.

\begin{algorithm}[t]
  \caption{$\alpha$-Residual-Informativeness Test}
  \label{alg:residual-informativeness-criterion}
  \begin{algorithmic}[1]
    \STATE {\bfseries Input:} Dependency measure $\rho(\cdot, \cdot)$,  number of folds $K$, independently collected episodes $\{(o_t, a_t, i_t, G_t)\}_{t=0}^{T-1}$, number of permutations $B$, significance level $\alpha$.
    \STATE Encode histories with recurrent network: $z_t = f_\text{RNN}(h_t)$.
    \STATE Partition data into $K$ folds.
    \FOR{each fold $k = 1$ to $K$}
        \STATE Train regression models on $K-1$ folds:
        \STATE Predict $\hat{\E}[G_t \vert z_t, a_t]$ and $\hat{\E}[i_t \vert z_t, a_t]$ on held-out fold.
        \STATE Compute residuals $\hat{e}_{G_t}$ and $\hat{e}_{i_t}$ on held-out fold (Eq.~\ref{eq:residual-estimates}).
    \ENDFOR
    \STATE Aggregate cross-fitted residuals across held-out folds.
    \STATE Compute observed dependence $\rho_\text{obs}$ using $\rho(\cdot,\cdot)$.
    \FOR{$b = 1$ to $B$}
        \STATE Permute residuals $\hat{e}_{G_t}$ on episode-level.
        \STATE Compute $\rho(\hat{e}^{(b)}_{G_t}, \hat{e}_{i_t})$ for permuted residuals.
    \ENDFOR
    \STATE Compute empirical $p$-value according to Eq.~\ref{eq:p-value-permutation}.
    \STATE {\bfseries Output:} Declare $i_t$ $\alpha$-residual-informative if $p < \alpha$.
  \end{algorithmic}
\end{algorithm}

\paragraph{Practical Implementation.}
In practice, we approximate $\E[G_t \vert h_t, a_t]$ and $\E[i_t \vert h_t, a_t]$ using regression models capable of capturing nonlinear relationships (e.g., random forests or neural networks), since $G_t$ and $i_t$ typically exhibit nonlinear dependence. Histories are generally encoded using a recurrent network to produce a fixed-length representation $z_t = f_{\text{RNN}}(h_t)$, yielding residual estimates
\begin{equation}
\hat{e}_{G_t} := G_t - \hat{\E}[G_t \vert z_t, a_t], \quad 
\hat{e}_{i_t} := i_t - \hat{\E}[i_t \vert z_t, a_t], \label{eq:residual-estimates}
\end{equation}
and for the null sample, we have $\hat{e}_{\tilde{G}_t}^{\text{null}}~:=~\tilde{G}_t^{\text{null}}~-~ \hat{\E}[G_t \vert z_t, a_t]$. We quantify the dependence between $\hat{e}_{G_t}$ and $\hat{e}_{i_t}$ using a general dependency measure $\rho(\cdot, \cdot)$ (e.g., mutual information or Hilbert-Schmidt Independence Criterion). By design, $\rho(\hat{e}_{G_t}, \hat{e}_{i_t}) = 0$ under independence and increases with stronger dependence. To account for finite-sample variability and temporal correlations, we approximate the surrogate null by episode-wise permutation of the observed residuals. Residuals $\hat{e}_{G_t}$ are shuffled across independently collected episodes, while keeping the temporal ordering within each episode unchanged, thereby preserving within-episode dependence structure and the estimated conditional mean. Cross-fitting is used to prevent overfitting: the data set is split into $K$ folds, with regressors trained on $K-1$ folds and evaluated on the held-out fold.

We compute the empirical $p$-value with $B$ permutations as
\begin{equation}
p = \frac{1 + \sum_{b=1}^{B} \mathbf{1}\{\rho(\hat{e}^{(b)}_{G_t}, \hat{e}_{i_t}) \ge \rho(\hat{e}_{G_t}, \hat{e}_{i_t})\}}{B + 1}, \label{eq:p-value-permutation}
\end{equation}
where $\hat{e}^{(b)}_{G_t}$ denotes the residual under the $b$-th permutation and $\mathbf{1}\{\cdot\}$ is the indicator function, which equals $1$ if the condition inside the braces holds and $0$ otherwise. The privileged signal $i_t$ is declared $\alpha$-residual-informative if $p < \alpha$.
Algorithm~\ref{alg:residual-informativeness-criterion} summarizes the procedure.

Notably, the $\alpha$-residual-informativeness criterion can be evaluated on episodes collected under any exploratory policy, without requiring a trained actor or critic. In particular, a random policy can be used, although coverage of the state-action space may be limited. This allows estimating the informativeness of privileged signals even before training and supports informed decisions about which signals to include. To obtain meaningful residuals, the history encoding function $f_{\text{RNN}}(\cdot)$ is typically trained in advance to produce effective representations of the interaction history. More generally, any encoding scheme (e.g., fixed-window or padding-based approaches) can be used, as long as it maps the history to a fixed-length representation. 

As detailed in Appendix~\ref{app:power-analysis}, for Lipschitz-continuous dependency measures, perturbations in the residuals can be bounded in terms of the regression error. If this error does not decrease with increasing sample size, the resulting test may deviate from its nominal significance level and exhibit reduced statistical power. To address this, we use cross-fitting and random forests, which are typically less prone to overfitting than deep neural networks in low-data regimes.

\subsection{Informativeness via Return Prediction Gain}
Informativeness can also be assessed via the critic’s predictive accuracy: a privileged signal is informative if its inclusion improves value estimation. This post-hoc criterion provides a quantitative measure of a signal's contribution to return prediction. It requires only episodes collected under any fixed policy and does not depend on policy performance.

Consider a symmetric critic $\hat{Q}(h_t, a_t)$ and an informed asymmetric critic $\hat{Q}(h_t, i_t, a_t)$. For a set of $N$ episodes $\{\tau_j\}_{j=1}^N$, we define the episode-level squared-error gain
\begin{equation*}
L^{\tau_j} := \frac{1}{T_j} \sum_{t=0}^{T_j-1} \!\big((\hat{Q}(h_t, a_t) - G_t)^2 - (\hat{Q}(h_t, i_t, a_t) - G_t)^2 \big),
\end{equation*}
where $T_j$ is the length of episode $\tau_j$. $L^{\tau_j} > 0$ indicates that including $i_t$ improves return prediction on that episode.

\begin{definition}[$(\epsilon,\delta)$-prediction informativeness]
A privileged signal $i_t$ is said to be \emph{$(\epsilon,\delta)$-prediction-informative} with $\epsilon \ge 0$ if a statistical test rejects the null hypothesis
\begin{equation*}
\mathbb{H}_0: \E[L^{\tau}] \le \epsilon,
\end{equation*}
in favor of the alternative $\mathbb{H}_1: \E[L^{\tau}] > \epsilon$,
at significance level $\delta$, based on the set of episode-level gains $\{L^{\tau_j}\}_{j=1}^{N}$.
\end{definition}

\paragraph{Practical Implementation.}
In practice, the test procedure depends on the number of episodes collected $N$. For small sample sizes, we apply a one-sided bootstrap test: we resample $B > N$ episodes with replacement, compute bootstrap means $\bar{L}^*_b$ for $1 \leq b \leq B$, and estimate the empirical $p$-value as
$p = \tfrac{1 + \sum_{b=1}^B \mathbf{1}\{\bar{L}^*_b \le 0\}}{B + 1}$, rejecting $\mathbb{H}_0$ if $p < \delta$. For larger sample sizes (e.g., $N > 1{,}000$), we apply a one-sided $t$-test based on the sample mean $\bar{L}$ and variance $\hat{\sigma}^2$. The test statistic is $t~=~\tfrac{\bar{L} - \epsilon}{\hat{\sigma}/\sqrt{N}}$,
which under $\mathbb{H}_0$ follows a Student-$t$ distribution with $N-1$ degrees of freedom. We reject $\mathbb{H}_0$ if the corresponding one-sided $p$-value is below $\delta$.

\subsection{Selecting Privileged Signal Generators}
Typically, we consider a state-conditioned candidate feature vector $c_t = (c_t^1, \dots, c_t^M)$, with $M \in \mathbb{Z}_{\ge 0}$, where each component corresponds to a candidate feature extracted from the underlying state or auxiliary training-time information. A privileged signal is then formed by selecting a subset of components $\mathcal{Z} \subseteq \{1, \dots, M\}$, yielding the signal $i_t := (c_t^m)_{m \in \mathcal{Z}}$. Different subsets may encode distinct aspects of the environment state and are not assumed to be equally informative a priori.

Both proposed informativeness criteria support principled selection over such feature subsets via hypothesis testing.For all subsets, we evaluate the sequence $\{(c_t^m)_{m \in \mathcal{Z}}\}_{t=0}^{T}$ and deem the feature subset informative if the corresponding null hypothesis is rejected. In the pre-training setting, we apply the $\alpha$-residual-informativeness test and retain subsets for which residual independence is rejected at significance level $\alpha$. In the post-hoc setting, we apply the $(\epsilon,\delta)$-prediction-informativeness test and retain subsets that yield a statistically significant improvement in value prediction accuracy ($\epsilon = 0$). Among informative candidates, one may select the feature subset with the largest effect size or smallest $p$-value, corresponding to the privileged signal that provides the strongest additional information about future returns under the respective criterion.

When only a single $i_t$ is available or when $c_t$ cannot be naturally decomposed into $M$ components, the same procedure can still be used to compare different learned encodings of the signal, or of the observation in the degenerate case $i_t=o_t$. In all cases, Theorem~\ref{th:informed_asymmetric_policy_gradient} applies without assumptions on the dimensionality or structure of $i_t$.

\begin{figure*}[h!]
    \begin{center}
    \includegraphics[width=\textwidth]{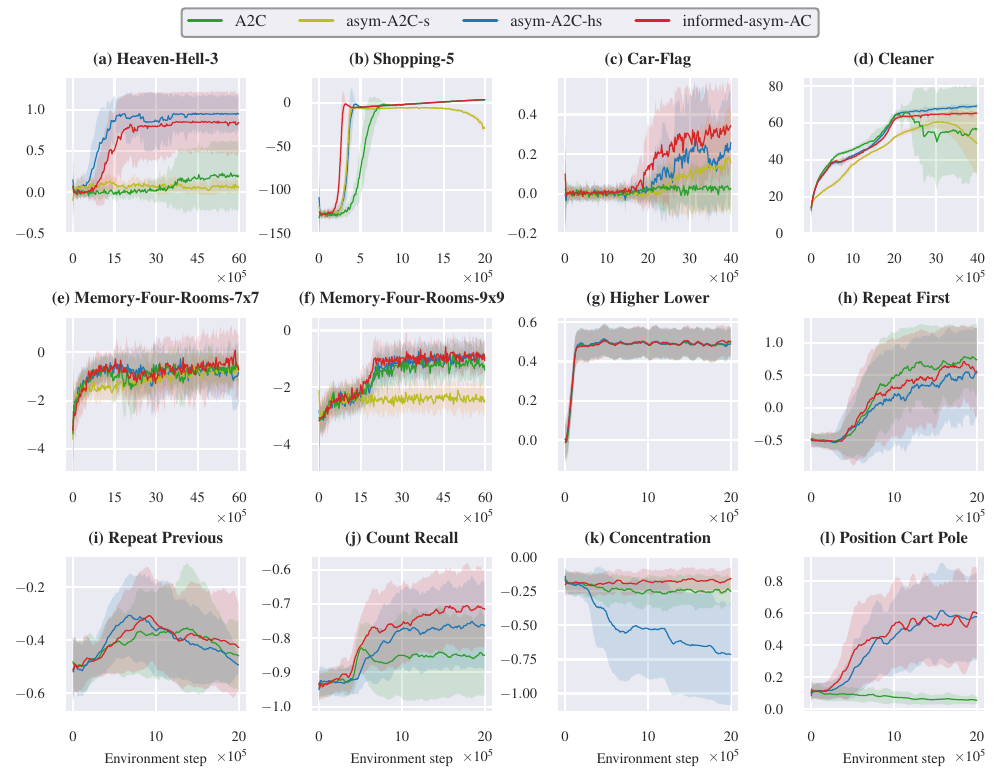}
    \caption{Learning performance on six benchmark navigation tasks (a-f) and six POPGym environments (g-l). Curves show mean episodic returns smoothed using a moving average over 100 episodes. Means and standard deviations are computed across 20 independent runs}
    \label{fig:learning_curves_benchmark}
    \end{center}
\end{figure*}

\section{Experiments}\label{sec:experiments}
We evaluate our informed asymmetric actor-critic framework along two dimensions. First, we assess performance on standard POMDP benchmarks against symmetric and asymmetric baselines. Second, we validate the proposed informativeness criteria in controlled informed POMDPs, where privileged signals can be systematically varied. We release our source code on GitHub\footnote{\href{https://github.com/EbiDa/informed-asymmetric-a2c}{https://github.com/EbiDa/informed-asymmetric-a2c}} to ensure reproducibility and report wall-clock runtimes in Appendix~\ref{app:runtimes}. Ablations and further results can be found in Appendix~\ref{app:ablations}.

\subsection{Benchmark Performance}
\paragraph{Environments.}
We use six benchmark navigation tasks from \citet{Baisero.2021} and six environments from the POPGym suite~\citep{Morad.2023}. The navigation tasks consist of \textit{Heaven-Hell-3}, \textit{Shopping-5}, \textit{Car-Flag}, \textit{Cleaner}, \textit{Memory-Four-Rooms-7x7}, and \textit{Memory-Four-Rooms-9x9}, while the POPGym suite includes \textit{Higher Lower}, \textit{Repeat First}, \textit{Repeat Previous}, \textit{Count Recall}, \textit{Concentration}, and \textit{Position Cart Pole}. For each environment, we define task-specific privileged signals available only to the critic; all policies receive identical inputs without access to additional information. Detailed descriptions of all environments are provided in Appendix~\ref{app:environment_details}.

\paragraph{Baselines.}
We compare our informed asymmetric actor-critic method, \textit{informed-asym-A2C}, against three advantage actor-critic (A2C) variants: (1) \textit{A2C}, a symmetric approach using a history-based critic $\hat{V}(h)$; (2) \textit{asym-A2C-hs}, an asymmetric variant with a history-state critic $\hat{V}(h,s)$; and (3) \textit{asym-A2C-s}, an asymmetric approach with a state-based critic $\hat{V}(s)$, evaluated only on the navigation tasks. We use model architectures and hyperparameters from \citet{Baisero.2021} and \citet{Morad.2023}, respectively; see Appendix~\ref{app:implentation_details_benchmark}.

\begin{table*}[h]
\caption{Comparison of candidate privileged signals using the residual-based and post-hoc prediction informativeness criteria. We provide mean $\pm$ standard deviation across ten independent runs on a randomly sampled informed POMDP with reward weights $w_r = [0.0001, 0.0001, -0.0001, -1.0, 1.0]$. We further include the area under the mean episodic return curve (AUC), computed from $50$ test episodes evaluated every $50$ gradient steps during training. The symmetric A2C baseline achieves an AUC of $1.06 \times 10^{5} \pm 1.61 \times 10^{4}$.}
\label{tab:cmi_ablation}
 \begin{center}
    \begin{small}
      \begin{sc}
\begin{tabular}{l c c c c c}
    \toprule
      \multirow{2}{*}{Privileged signal}
    & \multicolumn{2}{c}{Residual Informativeness} & \multicolumn{2}{c}{Prediction Informativeness} & \multirow{2}{*}{AUC} \\
     \cmidrule(lr){2-3} \cmidrule(lr){4-5} 
    
    & \normalfont $\rho_\text{obs}$ 
    & \normalfont $p$-value 
    & \normalfont $L_{\tau}$ 
    & \normalfont $p$-value 
    & \normalfont (sum of returns)\\
    \midrule
 
    $i_t = [s_t^1, s_t^2]$ & \normalfont 3.0e-05 $\pm$ 8.1e-06 & 0.438 $\pm$ 0.290 & \normalfont -1.4e-02 $\pm$ 0.014 & \normalfont 0.801 $\pm$ 0.330 & \normalfont 1.07e+05 $\pm$ 1.58e+04 \\
     
    $i_t = [s_t^1, s_t^2, s_t^3]$ & \normalfont 5.5e-05 $\pm$ 1.9e-05 & 0.170 $\pm$ 0.169 & \normalfont 0.007 $\pm$ 0.016 & \normalfont 0.397 $\pm$ 0.338 & \normalfont  1.08e+05 $\pm$ 1.56e+04\\
     
    $i_t = [s_t^1, s_t^2, s_t^4]$ & \normalfont 7.6e-05 $\pm$ 3.2e-05 & 0.119 $\pm$ 0.262 & \normalfont 0.035 $\pm$ 0.018 & \normalfont 0.043 $\pm$ 0.077 & \normalfont 1.08e+05 $\pm$ 1.57e+04\\
     
    $i_t = [s_t^1, s_t^2, s_t^5]$ & \normalfont 5.0e-05 $\pm$ 1.5e-05 & 0.191 $\pm$ 0.228 & \normalfont 0.018 $\pm$ 0.019 & \normalfont 0.232 $\pm$ 0.248 & \normalfont 1.10e+05 $\pm$ 1.69e+04\\
     
    $i_t = [s_t^1, s_t^2, s_t^3, s_t^4]$ & \normalfont 7.4e-05 $\pm$ 2.4e-05 & 0.068 $\pm$ 0.116 & \normalfont 0.046 $\pm$ 0.018 & \normalfont 0.009 $\pm$ 0.016 & \normalfont 1.07e+05 $\pm$ 1.56e+04\\
     
    $i_t = [s_t^1, s_t^2, s_t^3, s_t^5]$ &  \normalfont 5.8e-05 $\pm$ 1.5e-05 & 0.106 $\pm$ 0.141 & \normalfont 0.026 $\pm$ 0.019 & \normalfont 0.120 $\pm$ 0.152 & \normalfont 1.11e+05 $\pm$ 1.97e+04\\
     
    $i_t = [s_t^1, s_t^2, s_t^4, s_t^5]$ &  \normalfont 7.6e-05 $\pm$ 2.1e-05 & 0.035 $\pm$ 0.051 & \normalfont 0.064 $\pm$ 0.020 & \normalfont 0.003 $\pm$ 0.006 & \normalfont 1.23e+05 $\pm$ 1.76e+04\\
     
    $i_t = [s_t^1, s_t^2, s_t^3, s_t^4, s_t^5]$ &  \normalfont 7.0e-05 $\pm$ 1.8e-05 & 0.038 $\pm$ 0.051 & \normalfont 0.056 $\pm$ 0.032 & \normalfont 0.072 $\pm$ 0.131 & \normalfont 1.19e+05 $\pm$ 1.83e+04\\
    
    \bottomrule
\end{tabular}
      \end{sc}
    \end{small}
  \end{center}
\end{table*}

\paragraph{Results.}
Figure~\ref{fig:learning_curves_benchmark} reports the learning curves across all twelve benchmark tasks, showing episodic returns smoothed over 100 episodes and aggregated over 20 independent runs. For the POPGym environments, we present only the best-performing variant of \textit{informed-asym-A2C}. Across navigation environments, our method consistently improves sample efficiency and training stability relative to \textit{A2C}. In \textit{Heaven-Hell-3}, \textit{informed-asym-A2C} exhibits strong performance relative to \textit{A2C} and \textit{asym-A2C-s}, though it is slightly outperformed by \textit{asym-A2C-hs}. In \textit{Shopping-5}, the informed asymmetric actor-critic converges faster than \textit{asym-A2C-hs} and achieves comparable asymptotic return.
In \textit{Car-Flag}, it outperforms all baselines in both convergence speed and asymptotic return. 
In \textit{Cleaner}, \textit{asym-A2C-hs} achieves marginally higher returns, but \textit{informed-asym-A2C} converges at a similar rate with greater stability than \textit{A2C}, which suffers a performance drop after 2.5 million steps. In the Memory-Four-Rooms tasks, \textit{informed-asym-A2C} outperforms both asymmetric baselines. These findings are consistent with those in the POPGym environments. \textit{Informed-asym-A2C} achieves the highest final returns in most of the environments. In \textit{Higher Lower}, all methods perform similarly. On \textit{Repeat First}, \textit{A2C} peforms best, while on \textit{Position Cart Pole} both asymmetric variants clearly outperform \textit{A2C}, which fails to converge within 2 million environment steps. For \textit{Concentration}, the full-state A2C variant appears to struggle with convergence, likely due to the high-dimensional state representation with many potentially irrelevant features. 

Overall, these results demonstrate that asymmetric critics leveraging appropriately structured privileged signals can match or outperform full-state critics while relying on strictly less information.

\subsection{Validation of Informativeness Criteria}\label{sec:experiments_informativeness}
\paragraph{Environments.}
To empirically validate the proposed informativeness criteria in a controlled setting, we consider a distribution of synthetic informed POMDPs with finite state space ($|\mathcal{S}|=20$), discrete action space ($|\mathcal{A}|=4$), and continuous observation and privileged information spaces. Transition dynamics are generated following \citet{Lavet.2019} by sparsifying state-action transitions and normalizing to obtain valid probability distributions. Each state is associated with a latent Gaussian feature vector ($s_t \in \mathbb{R}^5$), and rewards are defined as linear functions of these features with weights $w_r$ sampled uniformly from $[-1,1]$. Privileged signals $i_t$ are constructed by masking subsets of state features, while observations $o_t$ correspond to noisy feature subsets of $i_t$; see Appendix~\ref{app:synthetic-informed-pomdp} for details.

\paragraph{Informativeness criteria.}
For the residual-based criterion, we estimate dependence between return-prediction residuals and privileged-signal residuals via the Hilbert-Schmidt Independence Criterion (HSIC) \cite{Gretton.2005} with Gaussian RBF kernels, where bandwidths are chosen via the median heuristic and a Nyström approximation \cite{Kalinke.2023} with 512 landmarks is applied for efficiency. We collect 250 episodes of length 25 under a random policy and perform episode-level permutation testing with $B = 1{,}000$ resamples. Conditional expectations are estimated via cross-fitting with a 100-tree random forest using 5-fold cross-validation. Observation-action histories are encoded with an RNN of width 64, trained on 100 training episodes. For the post-hoc criterion, we train symmetric and informed recurrent critics via TD learning using 5-fold cross-validation on the same environments for 2{,}500 episodes of length 25. In the informed critic, the 64-dimensional representation of the history is concatenated with the privileged signal and passed through a feedforward network to output a single value estimate. We evaluate episode-level gains in value prediction accuracy across held-out folds, with $\epsilon = 0$.

\paragraph{Results.}
Table~\ref{tab:cmi_ablation} reports test statistics and $p$-values for different candidate privileged signals under both criteria across ten independent runs on a randomly sampled environment. In this setting, the agent observes noiseless $o_t = (s_t^{1}, s_t^{2})$, while the reward weight vector $w_r$ assigns most of its weight to state components $s_t^{4}$ and $s_t^{5}$, which therefore primarily determine the reward. Privileged signals that include these components consistently exhibit stronger residual dependence and higher predictive gains. In contrast, signals containing only redundant or irrelevant components (e.g., $s_t^{3}$) yield weaker statistical evidence under both criteria. Notably, full-state access does not always achieve the best scores, indicating that additional features can be uninformative when misaligned with return prediction. 
Overall, the results show that both criteria effectively identify informative signals and that performance (cf. AUC in Table~\ref{tab:cmi_ablation}) is driven by selecting return-relevant features.

\subsection{Discussion}
Our findings suggest that learning performance depends primarily on whether the input contains reward-relevant information, rather than on the amount of information provided. In particular, adding state features that are unrelated or only weakly related to future returns can degrade value estimation by introducing structured noise, even when using high-capacity function approximators. This effect is especially pronounced when inputs include rapidly varying components unrelated to the return signal, akin to a ``noisy TV'' scenario.

Notably, selecting informative training-time signals is inherently task-dependent, similar in spirit to the design of observation spaces. Our framework extends this perspective by treating training-time information as an additional degree of freedom, allowing auxiliary variables to serve as candidate signals. Our criteria automatically identify informative subsets from a large set of candidates in a fully data-driven manner, without requiring manual feature engineering. This motivates principled signal selection as a complementary factor to architecture design and optimization in asymmetric actor-critic methods.
\section{Conclusion}
We introduced the informed asymmetric actor-critic framework, which enables the critic to leverage arbitrary state-dependent privileged signals during training while preserving unbiased policy gradients. This generalizes asymmetric actor-critic methods beyond full-state critics and shifts the focus from access to state information towards the choice of informative training-time signals. We further proposed two complementary criteria to assess signal informativeness, and demonstrated empirically that appropriately selected signals can improve value estimation and learning efficiency.

Future work may extend these criteria to account for signal complexity or direct policy effects, enabling better trade-offs between informativeness and model capacity, integrate signal selection end-to-end within the learning loop, and explore asymmetric critics that condition on histories of privileged signals.

\section*{Acknowledgments}
This work was supported by the \emph{Helmholtz Association Initiative and Networking Fund} on the HAICORE@KIT partition. Daniel Ebi acknowledges financial support by the \emph{German Research Foundation (DFG)} as part of the Research Training Group GRK 2153: ``Energy Status Data~–~Informatics Methods for its Collection, Analysis and Exploitation''. This work was carried out while Gaspard Lambrechts was a postdoctoral fellow of the Fund for Scientific Research (FNRS) at the University of Li\`ege, supported by the \emph{Wallonia-Brussels Federation} in Belgium.

\section*{Impact Statement}
This paper presents work whose goal is to advance the field of Machine Learning. There are many potential societal consequences of our work, none which we feel must be specifically highlighted here.

\bibliography{references}
\bibliographystyle{icml2026}

\clearpage
\appendix
\onecolumn
\section{Unbiasedness Results}\label{app:unbiasedness}
This section establishes unbiasedness properties of the informed history-based reward (Lemma~\ref{lemma:unbiasedness_informed_reward_function}), $Q$- (Lemma~\ref{lemma:unbiasedness_informed_asymmetric_q_function}), and value functions (Lemma~\ref{lemma:unbiasedness_informed_history_value_function}), which are central to the informed asymmetric actor-critic framework. 

\begin{lemma}[Unbiasedness of the informed history-based reward] \label{lemma:unbiasedness_informed_reward_function}
In an informed POMDP, the informed history-based reward function $ R(h_t, i_t, a_t) $ satisfies
\begin{equation*}
\E_{i_t\vert h_t} \left[ R(h_t, i_t, a_t)  \right] = R(h_t, a_t),
 \end{equation*}
for all $h_t \in \mathcal{H}$ and $a_t \in \mathcal{A}$, where the expectation is taken under the belief $p(i_t \vert h_t)$.
\end{lemma}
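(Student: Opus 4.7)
The plan is to prove the lemma by the tower property of conditional expectation, marginalizing the privileged information $i$ out of the informed reward. Since $R(h,i,a)$ is defined as $\E_{s \mid h,i}[R(s,a)]$ (Definition in Eq.~\eqref{eq:informed_reward_function}), and the standard history-based reward in a POMDP is $R(h,a) = \E_{s \mid h}[R(s,a)]$, the statement reduces to the identity
\[
\E_{i \mid h}\bigl[\E_{s \mid h,i}[R(s,a)]\bigr] = \E_{s \mid h}[R(s,a)].
\]

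The key step is to write the inner expectation explicitly as a sum (or integral) over $\mathcal{S}$ weighted by $p(s \mid h, i)$, pull the outer expectation inside by linearity, and then collapse the two conditional densities using the chain rule: $p(s \mid h, i)\, p(i \mid h) = p(s, i \mid h)$. Marginalizing over $i \in \mathcal{I}$ then yields $p(s \mid h) = \sum_{i} p(s, i \mid h)$, which recovers exactly $\E_{s \mid h}[R(s,a)] = R(h,a)$. Thus the proof is essentially three lines: apply the definition, swap the order of summation, and marginalize.

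The only subtlety worth highlighting is that this marginalization step is valid in the informed POMDP setting because the belief $p(s \mid h, i)$ is well-defined under the model — the information variable $i$ is generated from $s$ via $I(i \mid s)$, so Bayes' rule gives a consistent joint $p(s, i \mid h)$. No conditional independence structure between $o$ and $s$ given $i$ is needed here; the lemma is purely a statement about the law of total expectation applied to the auxiliary variable $i$.

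I do not anticipate a genuine obstacle: the result is a direct application of the tower property, and the main task in the write-up is simply to state the probabilistic factorization cleanly and to make the (implicit) definition $R(h,a) = \E_{s \mid h}[R(s,a)]$ explicit so that the two sides of the target identity are unambiguously comparable.
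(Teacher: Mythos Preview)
Your proposal is correct and mirrors the paper's own proof: both rely on the law of total expectation/probability to pass between $p(s\mid h)$ and $\sum_i p(s\mid h,i)\,p(i\mid h)$, swap the order of summation, and identify the result with the definition of $R(h,i,a)$. The only cosmetic difference is direction---the paper starts from $R(h,a)$ and expands, whereas you start from $\E_{i\mid h}[R(h,i,a)]$ and collapse---but the argument is identical.
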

\begin{proof}
    Using the definition of the standard history-based reward function, i.e., \begin{equation*}\label{eq:history_reward_function} R(h_t, a_t) = \E_{s_t \vert h_t} \big[R(s_t, a_t)\big] = \sum_{s_t \in \mathcal{S}}R(s_t, a_t) \ p(s_t \vert h_t),
    \end{equation*}
    and applying the law of total probability, we obtain: 
    \begin{equation*}
        \begin{split}
        \label{eq:reward_proof_change_order_integration}
            R(h_t, a_t) & = \sum_{s_t \in \mathcal{S}} p(s_t \vert h_t) R(s_t, a_t)\\
            & = \sum_{s_t \in \mathcal{S}} \ \Big( \sum_{i_t \in \mathcal{I}} p(s_t \vert h_t, i_t) \ p(i_t \vert h_t)\Big) R(s_t, a_t)\\
            & = \sum_{i_t \in \mathcal{I}} \Big( \sum_{s_t \in \mathcal{S}}R(s_t, a_t) \ p(s_t \vert h_t, i_t)\Big) \ p(i_t \vert h_t)\\
            & = \E_{i_t \vert  h_t } \Big[\E_{s_t \vert h_t, i_t} \big[ R(s_t, a_t)\big]\Big] \\
            & = \E_{i_t  \vert h_t } \big[R(h_t, i_t, a_t)\big].
        \end{split}
    \end{equation*}
    This concludes the proof.
\end{proof}

\begin{lemma}[Unbiasedness of the informed history $Q$-function] \label{lemma:unbiasedness_informed_asymmetric_q_function}
In an informed POMDP, the informed history $Q$-function $Q^{\pi}(h_t, i_t, a_t)$ satisfies
\begin{equation*}
\E_{i_t \vert h_t} \left[ Q^{\pi}(h_t, i_t, a_t) \right] = Q^{\pi}(h_t, a_t),
\end{equation*}
for all $h_t \in \mathcal{H}$ and $a_t \in \mathcal{A}$.
\end{lemma}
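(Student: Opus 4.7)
The plan is to mirror the short calculation used in Lemma~\ref{lemma:unbiasedness_informed_reward_function}, but apply it to the entire infinite-horizon return rather than to the one-step reward. Concretely, I will start from the definition of the standard history $Q$-function, insert a sum over $i \in \mathcal{I}$ via the law of total probability on the initial-state belief, and then swap the order of summation so that the outer expectation becomes an expectation with respect to $p(i \mid h)$ and the inner expression reduces to $Q^\pi(h, i, a)$ as defined in \eqref{eq:informed_q_function}.

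More concretely, I would write
\begin{equation*}
Q^\pi(h, a) = \sum_{s_0 \in \mathcal{S}} p(s_0 \mid h) \, \E^\pi_{s_{1:\infty}, a_{0:\infty} \mid s_0, a}\!\left[\sum_{j=0}^\infty \gamma^j R(s_j, a_j)\right],
\end{equation*}
then decompose $p(s_0 \mid h) = \sum_{i \in \mathcal{I}} p(s_0 \mid h, i) \, p(i \mid h)$ exactly as in the reward proof, and exchange the two summations. The inner sum over $s_0$ (weighted by $p(s_0 \mid h, i)$) together with the rollout expectation yields precisely $Q^\pi(h, i, a)$, so the identity $Q^\pi(h, a) = \E_{i \mid h}[Q^\pi(h, i, a)]$ follows. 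An auxiliary but crucial point is that conditioning on $a$ does not alter the belief over $i$: since the action is sampled from $\pi(\cdot \mid h)$ and $i$ is drawn from $I(\cdot \mid s_0)$, we have $a \perp i \mid h$, so $p(i \mid h, a) = p(i \mid h)$; this is what makes the expectation over $i$ the correct object on the right-hand side.

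The only real obstacle is rigor in exchanging an \emph{infinite} sum/integral with the sum over $i$. Unlike Lemma~\ref{lemma:unbiasedness_informed_reward_function}, here the integrand is an infinite series of discounted rewards rather than a single term. I would handle this by invoking the assumed uniform bound $|R(s,a)| \le r_{\max}$ together with $\gamma \in [0,1)$: the partial sums are dominated by the integrable constant $r_{\max}/(1-\gamma)$, so Fubini--Tonelli (or dominated convergence) legitimates every reordering of the sums over $i$, $s_0$, and the trajectory variables. Once this is noted once at the start, the algebraic manipulations themselves are a direct copy of the reward-function argument.
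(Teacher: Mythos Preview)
Your proposal is correct and follows essentially the same approach as the paper: both arguments start from the definition of $Q^\pi(h,a)$ and insert the privileged information $i$ via the law of total probability/expectation, then identify the inner expression as $Q^\pi(h,i,a)$. Your write-up is simply more explicit than the paper's three-line proof, spelling out the decomposition through $p(s_0\mid h,i)$, the conditional independence $a\perp i\mid h$, and the Fubini/dominated-convergence justification via the uniform bound $r_{\max}/(1-\gamma)$---all of which the paper leaves implicit.
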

\begin{proof}
Starting with the definition of the history $Q$-function and using the law of total expectation, we have:
     \begin{equation*}
        \begin{split}
            Q^{\pi}(h_t, a_t) & = \E^{\pi}_{s_{t:\infty}, a_{t+1:\infty}} \Bigg[\sum_{k=0}^{\infty} \gamma^k \ R\big(s_{t+k}, a_{t+k}\big)  \bigg| h_t, a_t\Bigg]\\
            & = \E_{i_t  \vert h_t } \Bigg[\E^{\pi}_{s_{t:\infty}, a_{t+1:\infty} } \Bigg[\sum_{k=0}^{\infty} \gamma^k \ R\big(s_{k+t}, a_{k+t}\big) \bigg|  h_t, i_t, a_t \Bigg]\Bigg] \\
            & = \E_{i_t  \vert h_t} \Big[Q^{\pi}(h_t, i_t, a_t)\Big].
        \end{split}
    \end{equation*}
    This concludes the proof.
\end{proof}

\begin{lemma}[Unbiasedness of the informed history value function] \label{lemma:unbiasedness_informed_history_value_function}
In an informed POMDP, the informed history value function $V^\pi(h_t, i_t)$ satisfies for all $h_t \in \mathcal{H}$:
\begin{equation*}
\E_{i_t \vert h_t} \left[ V^\pi(h_t, i_t) \right] = V^\pi(h_t).
\end{equation*}
\end{lemma}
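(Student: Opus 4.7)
The natural strategy is to reduce this claim to Lemma~\ref{lemma:unbiasedness_informed_asymmetric_q_function} by exploiting the recursive identity~\eqref{eq:informed_history_value_function}, which expresses $V^\pi(h, i)$ as a $\pi$-weighted sum of informed $Q$-values $Q^\pi(h, i, a)$. Since this decomposition is linear in the informed $Q$-function and the weights $\pi(a \mid h)$ do not depend on $i$, the expectation over $i \sim p(i \mid h)$ passes directly through the sum over actions.

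Concretely, I would proceed in three short steps. First, substitute the recursive form~\eqref{eq:informed_history_value_function} into $\E_{i \mid h}[V^\pi(h, i)]$ and swap the expectation with the finite sum over $a \in \mathcal{A}$ using linearity, pulling the $i$-independent factor $\pi(a \mid h)$ outside. Second, apply Lemma~\ref{lemma:unbiasedness_informed_asymmetric_q_function} to each inner expectation to replace $\E_{i \mid h}[Q^\pi(h, i, a)]$ by $Q^\pi(h, a)$. Third, recognize the resulting expression $\sum_{a \in \mathcal{A}} \pi(a \mid h)\, Q^\pi(h, a)$ as precisely the definition of the standard history value $V^\pi(h)$ given in Section~\ref{sec:POMDP}.

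\textbf{Alternative route.} If one prefers to work from the primitive definition~\eqref{eq:informed_history_value_function_base_case} rather than the recursive form, the argument is equally direct: apply the tower property to the trajectory expectation, exactly as in the proof of Lemma~\ref{lemma:unbiasedness_informed_asymmetric_q_function}, by inserting $\E_{i \mid h}[\,\cdot\,]$ around $\E^\pi_{s_{0:\infty}, a_{0:\infty} \mid h, i}[\,\cdot\,]$ and collapsing the two nested conditionals via the law of total expectation. Both routes yield the same identity.

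\textbf{Main obstacle.} There is essentially no technical difficulty here; the result is a one-line corollary of Lemma~\ref{lemma:unbiasedness_informed_asymmetric_q_function}. The only subtlety worth flagging is that the actor distribution $\pi(a \mid h)$ genuinely depends on $h$ alone and not on $i$, which is what allows the action weights to be pulled out of the expectation over $i$; this is guaranteed by the asymmetric design in which the policy does not observe privileged information.
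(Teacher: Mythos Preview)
Your proposal is correct. Your \emph{alternative route} is exactly the argument the paper gives: it starts from the trajectory-expectation definition of $V^\pi(h)$, inserts $\E_{i\mid h}[\,\cdot\,]$ via the law of total expectation, and identifies the inner conditional expectation as $V^\pi(h,i)$ from~\eqref{eq:informed_history_value_function_base_case}. Your \emph{primary} route---substituting the recursive form~\eqref{eq:informed_history_value_function}, pulling $\pi(a\mid h)$ out of $\E_{i\mid h}$, and invoking Lemma~\ref{lemma:unbiasedness_informed_asymmetric_q_function} termwise---is a legitimate and equally short alternative that the paper does not use. The trade-off is minor: the paper's direct tower-property argument is self-contained and mirrors the proof of Lemma~\ref{lemma:unbiasedness_informed_asymmetric_q_function} verbatim, whereas your reduction makes the logical dependence on the $Q$-function lemma explicit and highlights that the key structural fact is the $i$-independence of $\pi(a\mid h)$. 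Either route is perfectly acceptable here.
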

\begin{proof} Given the definition of the history value function, i.e.,
\begin{equation*}
V^\pi(h_t) = \mathbb{E}^{\pi}_{s_{t:\infty},\, a_{t+1:\infty} } \left[\sum_{k=0}^\infty \gamma^k R(s_{k+t}, a_{k+t}) \bigg|  h_t\right],
\end{equation*}
and using the law of total expectation, we have:
     \begin{equation*}
        \begin{split}
            V^{\pi}(h_t) & = \mathbb{E}^{\pi}_{s_{t:\infty}, a_{t+1:\infty}} \Big[\sum_{k}^{\infty} \gamma^k \ R\big(s_{k+t}, a_{k+t}\big)  \bigg| h_t\Big]\\
            & = \mathbb{E}_{i_t \vert h_t} \Bigg[\mathbb{E}^{\pi}_{s_{t:\infty}, a_{t+1:\infty} } \Bigg[\sum_{k}^{\infty} \gamma^k \ R\big(s_{k+t}, a_{k+t}\big) \bigg| h_t, i_t \Bigg]\Bigg] \\
            & = \mathbb{E}_{i_t \vert h_t} \Big[V^{\pi}(h_t, i_t)\Big].
        \end{split}
    \end{equation*}
    This concludes the proof.
\end{proof}
\section{Auxiliary Results}\label{app:auxiliary_results}
This section collects our auxiliary results.

\begin{corollary}[Relation of $V^{\pi}(h, i)$ to the history-state value function of \citet{Baisero.2021}]
\label{corollary:reduction_history-state_value_function}
The informed history value function $V^{\pi}(h, i)$ reduces to the history-state value function for $i = s$, where $s \in \mathcal{S}$ denotes the true environment state. In particular,
\begin{equation*}
    V^\pi(h, s) = \sum_{a \in \mathcal{A}} \pi(a \vert h) \, Q^\pi(h, s, a),
\end{equation*}
where the history-state action-value function is defined as
\begin{equation*}
    Q^\pi(h, s, a) = R(s, a) + \gamma \, \\E_{s^\prime, o^\prime} \left[ V^\pi(h^\prime, s^\prime)  \vert s, a\right],
\end{equation*}
with $s^\prime \sim p(s^\prime \vert s, a)$, $o^\prime \sim \widetilde{O}(o^\prime \vert s^\prime)$, $i^\prime = s^\prime$, and $h^\prime$ denoting the updated history resulting from appending action $a$ and observation $o^\prime$ to $h$.

By Lemma~\ref{lemma:unbiasedness_informed_history_value_function}, this formulation provides an alternative unbiased estimator of the history value function:
\begin{equation*}
    V^\pi(h) = \\E_{s \vert h} \big[ V^\pi(h, s) \big],
\end{equation*}
as previously established by \citet{Baisero.2021}.
\end{corollary}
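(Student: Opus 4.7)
The plan is to verify both claims of the corollary by direct substitution of $i = s$ into the informed POMDP machinery and then to invoke Lemma~\ref{lemma:unbiasedness_informed_history_value_function}. I would start by realizing the true state as privileged information: set $\mathcal{I} = \mathcal{S}$ with the deterministic information function $I(i \mid s) = \mathbb{I}[i = s]$, and take $\widetilde{O}(o \mid i) = O(o \mid i)$ (identifying $i$ with a state in $\mathcal{S}$), so that the induced execution POMDP coincides with the original POMDP. Under this choice, $p(s' \mid h, i) = \mathbb{I}[s' = i]$ and the marginal $p(i \mid h)$ equals the belief $p(s \mid h)$, which are the facts needed throughout.

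Next, I would show that the informed history-based reward collapses to the state reward: by Equation~\eqref{eq:informed_reward_function}, $R(h, s, a) = \E_{s' \mid h, s}[R(s', a)] = R(s, a)$, independent of $h$. Substituting this into the Bellman recursion in Equation~\eqref{eq:informed_history_q_function_bellman}, the joint expectation over $(o', i')$ given $(i, a) = (s, a)$ becomes one over $(o', s')$ drawn from $T(s' \mid s, a)$ and $\widetilde{O}(o' \mid s')$, because $i' = s'$ deterministically. This yields the stated Bellman form $Q^\pi(h, s, a) = R(s, a) + \gamma\, \E_{s', o' \mid s, a}[V^\pi(h', s')]$, and the identity $V^\pi(h, s) = \sum_{a} \pi(a \mid h)\, Q^\pi(h, s, a)$ then follows directly from Equation~\eqref{eq:informed_history_value_function}.

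The unbiasedness claim is a direct consequence of Lemma~\ref{lemma:unbiasedness_informed_history_value_function}: under the chosen $I$, the expectation $\E_{i \mid h}[V^\pi(h, i)]$ equals $\E_{s \mid h}[V^\pi(h, s)]$, so the lemma immediately gives $V^\pi(h) = \E_{s \mid h}[V^\pi(h, s)]$, recovering the estimator of \citet{Baisero.2021}. The main obstacle, though a minor one, is the careful casting of an arbitrary POMDP as an informed POMDP with a degenerate (deterministic) information channel and verifying that $\widetilde{O}$ can be consistently chosen so that the reduction does not secretly alter the underlying transition or observation dynamics. Once this identification is made, the rest of the proof is a mechanical unpacking of the definitions already established.
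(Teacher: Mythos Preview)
Your proposal is correct and follows exactly the route the paper has in mind: the corollary is stated in the appendix without an explicit proof, and your argument---instantiating $\mathcal{I}=\mathcal{S}$ with the degenerate information map $I(i\mid s)=\mathbb{I}[i=s]$ and $\widetilde{O}=O$, then unpacking Equations~\eqref{eq:informed_reward_function}, \eqref{eq:informed_history_value_function}, \eqref{eq:informed_history_q_function_bellman} and invoking Lemma~\ref{lemma:unbiasedness_informed_history_value_function}---is precisely the direct substitution the paper leaves implicit. Your care in checking that this choice of $\widetilde{O}$ reproduces the original execution POMDP is the only non-trivial verification, and you handle it correctly.
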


\begin{corollary}[Relation of $\nabla_\theta^{\text{IAAC}} J(\pi_{\theta})$ to the asymmetric policy gradient of \citet{Baisero.2021}]
\label{corollary:reduction_asymmetric_policy_gradient}
The informed asymmetric policy gradient $\nabla_\theta^{\text{IAAC}} J(\pi_{\theta})$ reduces to the asymmetric policy gradient introduced by \citet{Baisero.2021} for $i = s$, where $s \in \mathcal{S}$ denotes the true environment state. In particular,
\begin{equation*}
    \nabla_\theta^{\text{AC}} J(\pi_{\theta}) = 
    \\E \left[\sum_{t=0}^{\infty} \gamma^t \, Q^{\pi}(h_t, s_t, a_t) \, \nabla_{\theta} \log \pi_{\theta}(a_t \vert h_t) \right].
\end{equation*}
Following Lemma~\ref{lemma:unbiasedness_informed_reward_function} and \ref{lemma:unbiasedness_informed_history_value_function}, this formulation recovers an alternative asymmetric policy gradient estimator that is equivalent to the standard policy gradient:
\begin{equation*}
    \nabla_\theta^{\text{AC}} J(\pi_{\theta}) = \nabla_\theta J(\pi_{\theta}),
\end{equation*}
as established by \citet{Baisero.2021}.
\end{corollary}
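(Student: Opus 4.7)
The plan is to derive Corollary~\ref{corollary:reduction_asymmetric_policy_gradient} as a direct specialization of Theorem~\ref{th:informed_asymmetric_policy_gradient} by embedding the standard POMDP into the informed POMDP framework with $\mathcal{I} = \mathcal{S}$. Specifically, I would define the information function as the deterministic coupling $I(i \mid s) = \mathbb{1}[i = s]$ and the emission as $\widetilde{O}(o \mid i) = O(o \mid i)$. First I would verify that this yields a valid informed POMDP: the conditional independence $p(o \mid i, s) = \widetilde{O}(o \mid i)$ holds trivially because $i = s$ with probability one, and the induced execution POMDP recovers the original observation function via the marginalization $O(o \mid s) = \sum_{i} \widetilde{O}(o \mid i) I(i \mid s)$. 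Under this identification, the conditional belief $p(s' \mid h, i)$ collapses to the point mass $\mathbb{1}[s' = i]$.

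Next, I would propagate this collapse through the definitions of Section~4.1. The informed reward $R(h, i, a) = \E_{s \mid h, i}[R(s,a)]$ reduces to $R(i, a) = R(s, a)$; the informed $Q$-function of Equation~\ref{eq:informed_q_function} specializes to $Q^\pi(h, s, a) = \E^\pi_{s_{0:\infty}, a_{0:\infty} \mid h, s, a}[\sum_j \gamma^j R(s_j, a_j)]$; and the informed value function of Equation~\ref{eq:informed_history_value_function} gives $V^\pi(h, s) = \sum_a \pi(a \mid h) Q^\pi(h, s, a)$, matching the history-state value function of Corollary~\ref{corollary:reduction_history-state_value_function}. The Bellman recursion in Equation~\ref{eq:informed_history_q_function_bellman} then reduces to $Q^\pi(h, s, a) = R(s, a) + \gamma \E_{s', o' \mid s, a}[V^\pi(h', s')]$ with $i' = s'$ identified via the deterministic information function, matching the form due to \citet{Baisero.2021}. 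Substituting $i_t = s_t$ into the informed asymmetric policy gradient then yields exactly the expression for $\nabla_\theta^{\text{AC}} J(\pi_\theta)$ stated in the corollary.

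Finally, the equivalence $\nabla_\theta^{\text{AC}} J(\pi_\theta) = \nabla_\theta J(\pi_\theta)$ is inherited directly from Theorem~\ref{th:informed_asymmetric_policy_gradient}, applied with the admissible privileged signal $i_t = s_t$: the theorem holds for arbitrary state-conditioned information, so the state itself is a legitimate instance. The main technical obstacle, though mild, is making the embedding $\mathcal{I} = \mathcal{S}$ rigorous, in particular ensuring that the outer expectation $\E_{i \mid h}[\cdot]$ appearing in Lemmas~\ref{lemma:unbiasedness_informed_reward_function}--\ref{lemma:unbiasedness_informed_history_value_function} corresponds to the belief $\E_{s \mid h}[\cdot]$ over states (so that the unbiasedness proofs transfer without modification) and that the deterministic coupling does not violate any implicit regularity assumption on $I$. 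Once this identification is established, the corollary is essentially a substitution argument that automatically recovers both the gradient estimator and its equivalence to the standard policy gradient.
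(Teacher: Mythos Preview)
Your proposal is correct and takes essentially the same approach as the paper: the corollary is obtained simply by specializing the informed POMDP to the case $\mathcal{I}=\mathcal{S}$ with $I(i\mid s)=\mathbb{1}[i=s]$ and invoking Theorem~\ref{th:informed_asymmetric_policy_gradient}. The paper does not even give a formal proof here, treating the result as immediate from the substitution $i_t=s_t$ together with Lemmas~\ref{lemma:unbiasedness_informed_reward_function}--\ref{lemma:unbiasedness_informed_history_value_function}; your write-up is if anything more careful in verifying that the embedding is valid and that the Bellman recursion and beliefs collapse as required.
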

\section{Analysis of the $\alpha$-Residual-Informativeness Test}\label{app:analysis_residual_test}
In this section, we describe the construction of the proposed  $\alpha$-residual-informativeness test and show how regression error affects its statistical power.
\subsection{Test Construction}\label{app:surrogate-null}
First, we formalize the null distribution implicitly approximated by the permutation procedure used in the residual-based informativeness test. Our goal is to clarify how the proposed test relates to the conditional independence (CI) hypothesis
\begin{equation*}
\mathbb{H}_0^{\mathrm{CI}}: \quad G_t \perp\!\!\!\perp i_t \vert h_t, a_t.
\end{equation*}
Let $G_t, i_t, h_t$, and $a_t$ be random variables with joint distribution $p(G_t, i_t, h_t, a_t)$ and finite second moments. Then, the conditional-mean residuals are defined as
\begin{equation*}
e_{G_t} := G_t - \E[G_t \vert h_t, a_t],
\qquad
e_{i_t} := i_t - \E[i_t \vert h_t, a_t].
\end{equation*}

\begin{proposition}
If $\mathbb{H}_0^{\mathrm{CI}}$ holds, then
\begin{equation*}
e_{G_t} \perp\!\!\!\perp e_{i_t} \mid h_t, a_t.
\end{equation*}
\end{proposition}

\begin{proof}
Under $\mathbb{H}_0^{\mathrm{CI}}$, the conditional joint distribution factorizes:
\begin{equation*}
p(G_t, i_t \vert h_t, a_t) = p(G_t \vert h_t, a_t)\, p(i_t \vert h_t, a_t).
\end{equation*}
Subtracting measurable functions of $(h_t, a_t)$ from each variable does not affect CI given $(h_t, a_t)$, so
\begin{equation*}
p(e_{G_t}, e_{i_t} \vert h_t, a_t) = p(e_{G_t} \vert h_t, a_t)\, p(e_{i_t} \vert h_t, a_t).
\end{equation*}

This concludes the proof.
\end{proof}

The converse does not hold in general~\cite{Zhang.2023}; thus, testing residual independence is a relaxation of CI testing.

\paragraph{Ideal residual null distribution.}
An exact test of $\mathbb{H}_0^{\mathrm{res}}: e_{G_t} \perp\!\!\!\perp e_{i_t} \mid h_t, a_t$ would require samples from
\begin{equation*}
p(e_{G_t}, e_{i_t},h_t, a_t) =  p(h_t, a_t)\, p(e_{G_t} \vert h_t, a_t)\, p(e_{i_t} \vert h_t, a_t).
\end{equation*}

While $(h_t, a_t, e_{i_t})$ can be sampled from data once $\E[i_t \vert h_t, a_t]$ is estimated, obtaining independent samples from $p(e_{G_t} \vert h_t, a_t)$ is intractable because it requires sampling from the unknown return distribution conditioned on the full history $h_t$ and action $a_t$.

\paragraph{First-moment-matching surrogate.}
To construct a tractable null distribution, we replace the original return $G_t$ by a surrogate variable.

\begin{assumption}[Existence of a surrogate]\label{assumption:surrogate}
    There exists a random variable $\tilde G_t$ defined on the same probability space such that 
    \begin{enumerate}
        \item[(i)] $\tilde G_t \overset{d}{=} G_t$ (same marginal distribution),
        \item[(ii)] $\tilde G_t  \perp\!\!\!\perp i_t, h_t, a_t$ (independence from the privileged signal, history, and action),
        \item[(iii)] $\mathbb{E}[\tilde G_t^2] < \infty$ (finite second moment).
    \end{enumerate}
\end{assumption}

Under Assumption~\ref{assumption:surrogate}, the surrogate null $\tilde{G_t}^{\text{null}}$ is then obtained by replacing $G_t$ with $\tilde{G_t}$, i.e., 
\begin{equation}
\tilde{G_t}^{\text{null}} := \tilde G_t - \E[\tilde G_t] + \E[G_t \vert h_t, a_t].
\label{eq:surrogate-null}
\end{equation}
The corresponding surrogate null residual is
\begin{equation*}
    e_{\tilde{G}_t}^{\text{null}} := \tilde{G}^{\text{null}}_t - \E[G_t \vert h_t, a_t] = \tilde{G}_t - \E[\tilde{G}_t].
\end{equation*}

\begin{proposition}[Properties of the surrogate residual]
For all $(h_t, a_t)$,
\begin{equation*}
\E[e_{\tilde{G}_t}^{\text{null}} \vert h_t, a_t] = 0, \qquad \text{and} \qquad
e_{\tilde G_t}^{\text{null}} \perp\!\!\!\perp i_t \vert h_t, a_t.
\end{equation*}
\end{proposition}
\begin{proof}
First, we prove zero conditional mean. By definition,
\begin{equation*}
    e_{\tilde{G}_t}^{\text{null}} = \tilde{G}_t - \E[\tilde{G}_t],
\end{equation*}
where $\E[\tilde G_t]$ is a constant. Taking the conditional expectation given $(h_t,a_t)$, yields
\begin{equation*}
    \E[e_{\tilde{G}_t}^{\text{null}} \vert h_t, a_t] = \E[\tilde{G}_t  \vert h_t, a_t] - \E[\tilde{G}_t].
\end{equation*}
Since $\tilde G_t$ is independent of $(h_t, a_t)$ by Assumption~\ref{assumption:surrogate}(ii),
\begin{equation*}
     \E[\tilde{G}_t  \vert h_t, a_t] = \E[\tilde G_t].
\end{equation*}
Therefore, 
\begin{equation*}
    \E[e_{\tilde{G}_t}^{\text{null}} \vert h_t, a_t] = 0.
\end{equation*}

Second, we show conditional independence from $i_t$.
Again using
\begin{equation*}
     e_{\tilde{G}_t}^{\text{null}} = \tilde{G}_t - \E[\tilde{G}_t],
\end{equation*}
note that subtracting a constant does not affect independence relations. Thus, it suffices to show
\begin{equation*}
    \tilde G_t \perp\!\!\!\perp i_t \vert h_t, a_t.
\end{equation*}
For any measurable sets $\mathcal{X}$, $\mathcal{Y}$,
\begin{equation}\label{eq:conditioning_prob}
    p(\tilde G_t \in \mathcal{X}, i_t \in \mathcal{Y} \vert h_t, a_t) = \frac{p(\tilde G_t \in \mathcal{X}, i_t \in \mathcal{Y}, h_t, a_t)}{p(h_t,a_t)}.
\end{equation}
Since $\tilde G_t$ is independent of $(h_t, a_t, i_t)$,
\begin{equation*}
    p(\tilde G_t \in \mathcal{X}, i_t \in \mathcal{Y}, h_t, a_t) = p(\tilde G_t \in \mathcal{X}) \ p(i_t \in \mathcal{Y}, h_t, a_t).
\end{equation*}
Substituting into Equation~\ref{eq:conditioning_prob} gives
\begin{align*}
    p(\tilde G_t \in \mathcal{X}, i_t \in \mathcal{Y} \vert h_t, a_t) &=  p(\tilde G_t \in \mathcal{X}) \ \frac{p(i_t \in \mathcal{Y}, h_t, a_t)}{p(h_t,a_t)}\\
    &= p(\tilde G_t  \in \mathcal{X}) \ p(i_t \in \mathcal{Y} \vert h_t, a_t).
\end{align*}
Finally, because $\tilde G_t \perp\!\!\!\perp h_t, a_t$, we have
\begin{equation*}
    p(\tilde G_t  \in \mathcal{X}) = p(\tilde G_t  \in \mathcal{X} \vert h_t, a_t).
\end{equation*}
Therefore,
\begin{equation*}
    p(\tilde G_t \in \mathcal{X}, i_t \in \mathcal{Y} \vert h_t, a_t) = p(\tilde G_t  \in \mathcal{X} \vert h_t, a_t) \ p(i_t \in \mathcal{Y} \vert h_t, a_t).
\end{equation*}
which establishes
\begin{equation*}
    \tilde G_t \perp\!\!\!\perp i_t \vert h_t, a_t.
\end{equation*}
Hence,
\begin{equation*}
    e_{\tilde{G}_t}^{\text{null}} \perp\!\!\!\perp i_t \vert h_t, a_t.
\end{equation*}
This concludes the proof.
\end{proof}

Thus, the surrogate residual distribution
\begin{equation*}
p(e_{\tilde{G}_t}^{\text{null}}, e_{i_t},h_t, a_t) = p(h_t, a_t)\, p(e_{\tilde{G}_t}^{\text{null}} \vert h_t, a_t)\, p(e_{i_t} \vert h_t, a_t)
\end{equation*}
matches the ideal residual-based null in the conditional mean of the residual while enforcing conditional independence (CI).

Therefore, the permutation test operates under a first-moment-matching surrogate null rather than the ideal CI null. Nevertheless, the test targets violations of residual independence and allows for identifying privileged signals that provide additional predictive information about returns. Moreover, the proposed test has the following interpretation:
\begin{itemize}
    \item If CI holds, then residuals are conditionally independent given $(h_t, a_t)$, and the test does not reject $\mathbb{H}_0^{\mathrm{res}}$ asymptotically under standard regularity conditions.
    \item If the test rejects $\mathbb{H}_0^{\mathrm{res}}$, then $i_t$ contains predictive information about the conditional mean of $G_t$ beyond what is captured by $\E[G_t \vert h_t, a_t]$.
    \item The test does not guarantee full CI, only the absence of residual predictive information.
\end{itemize}

However, this notion of informativeness is in line with the objective of value-function learning, where conditional means, not full conditional distributions, determine optimal predictions.

\subsection{Effect of Regression Error on the Dependence Test}
\label{app:power-analysis}
This subsection discusses how estimation errors in conditional expectations propagate to the population dependency measure underlying the residual-informativeness test.

\paragraph{Regression error and residual perturbation.}
Let $f_G(h_t,a_t) = \E[G_t \vert h_t,a_t]$ and $f_i(h_t,a_t) = \E[i_t \vert h_t,a_t]$ denote the true conditional expectations, and let $\hat{f}_G(h_t,a_t), \hat{f}_i(h_t,a_t)$ be learned estimators. Define the regression errors
\begin{align}
\Delta_{G_t} &= f_G(h_t,a_t) - \hat f_G(h_t,a_t), \\
\Delta_{i_t} &= f_i(h_t,a_t) - \hat f_i(h_t,a_t).
\end{align}
We assume the regression errors have finite second moments, i.e., $\E[\Delta^2_{G_t}] < \infty$, and $\E[\Delta^2_{i_t}] < \infty$.

The population residuals are
\begin{equation*}
e_{G_t} = G_t - f_G(h_t,a_t), 
\qquad 
e_{i_t} = i_t - f_i(h_t,a_t),
\end{equation*}
while the estimated residuals are
\begin{equation*}
\hat e_{G_t} = G_t - \hat f_G(h_t,a_t) = e_{G_t} + \Delta_{G_t}, 
\qquad
\hat e_{i_t} = i_t - \hat f_i(h_t,a_t) = e_{i_t} + \Delta_{i_t}.
\end{equation*}

Throughout this section, we assume sample splitting, following the procedure described in Algorithm~\ref{alg:residual-informativeness-criterion}, so that regression estimation is independent of the dependence test.

Let $\rho(X,Y)$ denote a population dependency measure between square-integrable random variables. We impose the following stability condition.

\begin{assumption}[Lipschitz continuity in $L^2$]
\label{assumption:lipschitz}
There exists a constant $L_\rho > 0$ such that for all $X,X^{\prime},Y,Y^{\prime}$ with finite second moments,
\begin{equation*}
\big| \rho(X,Y) - \rho(X^{\prime},Y^{\prime}) \big|
\le
L_\rho \big( \|X-X^{\prime}\|_{2} + \|Y-Y^{\prime}\|_{2} \big),
\end{equation*}
where $\|Z\|_2 := \sqrt{\E[Z^2]}$.
\end{assumption}

This assumption is satisfied by many kernel- and distance-based dependency measures under additional regularity conditions (i.e., bounded Lipschitz kernels and finite second moments).

Applying Assumption~\ref{assumption:lipschitz} with
\begin{equation*}
(X,Y) = (e_{G_t}, e_{i_t}), 
\qquad
(X^{\prime},Y^{\prime}) = (\hat e_{G_t}, \hat e_{i_t}),
\end{equation*}
yields
\begin{equation}
\label{eq:stat-bound}
\big| \rho(e_{G_t}, e_{i_t}) - \rho(\hat e_{G_t}, \hat e_{i_t}) \big|
\le
L_\rho \big( \|\Delta_{G_t}\|_2 + \|\Delta_{i_t}\|_2 \big).
\end{equation}

Thus, regression error perturbs the population dependency measure by at most an additive margin proportional to the $L^2$ regression errors.
\section{Environments}
In this section, we detail the partially observable environments used in our experiments.
\subsection{Navigation Environments}\label{app:environment_details}
This subsection presents the partially observable benchmark navigation environments used to assess the learning performance of the proposed informed asymmetric actor-critic method.

\paragraph{Heaven-Hell-3.}
The Heaven-Hell task \cite{Geffner.1998, baisero2019gympomdps} is a partially observable navigation problem in a grid-world environment with a corridor-like structure that forks into three distinct branches. Two of these branches correspond to terminal exits: one leading to a positive outcome (heaven) and the other to a negative outcome (hell). The third branch leads to a non-terminal location where the agent can interact with an oracle (priest) who provides information needed to disambiguate the exits. The agent is initially unaware of which terminal corresponds to heaven.

The underlying state includes both the agent's position and the true location of the heaven exit. As observation, however, the agent either perceives its own location or, when visiting the priest, receives an observation that reveals the location of heaven. We construct privileged partial information by adding to the agent's location its distance to the heaven terminal using Manhattan distance.

At each time step, the agent selects an action from the discrete set ${\texttt{NORTH}, \texttt{SOUTH}, \texttt{EAST}, \texttt{WEST}}$. The environment is deterministic, and movement is constrained by the grid-world layout. To solve the task optimally, the agent must first visit the priest to acquire the necessary information about the correct exit, then return to the fork and proceed to the identified heaven location.

The agent receives sparse feedback in the form of a terminal reward: a reward of 1.0 for exiting to heaven, and a reward of -1.0 for exiting to hell.

\paragraph{Shopping-5.}
The Shopping-5 environment \cite{baisero2019gympomdps} models another grid-world navigation task in which an agent must buy a forgotten item from a store. The environment is modeled as a two-dimensional grid-world of size $5 \times 5$, with the item placed randomly at one of the grid cells. The agent begins at an arbitrary location and must first locate and then buy the item. While the agent's position is fully observable, the item's position is hidden and must be explicitly queried.

Hence, the full state encodes both the agent's position and the item's location, represented compactly as integers. Observations are similarly encoded, but are partial: at each time step, the agent observes either its own position or, upon executing a query, the position of the item. Similar to the Heaven-Hell task, we introduce a privileged information by computing the current Manhattan distance between the agent and the item.

At each time step, the agent selects an action from the discrete set 
\{\texttt{UP}, \texttt{DOWN}, \texttt{LEFT}, \texttt{RIGHT}, \texttt{QUERY}, \texttt{BUY}\}. The four movement actions update the agent's position deterministically within the bounds of the grid. Executing the \texttt{QUERY} action returns the location of the item, but is subject to a cost. The \texttt{BUY} action attempts to purchase the item at the agent's current position; if executed in the correct cell, it completes the task successfully.

The environment provides a dense reward signal to encourage efficient behavior: a reward of -1.0 for moving, a reward of -2.0 for querying the item's location, a reward of -5.0 for a \texttt{BUY} action in the wrong cell, and a reward of +10.0 for a \texttt{BUY} action in the correct cell.

Optimal behavior requires the agent to query the item's location once, retain that information internally, and efficiently navigate to the target cell before executing a successful \texttt{BUY} action.

\paragraph{Car-Flag.}
The Car-Flag environment \cite{nguyen2021pomdpdomains} models a continuous control task where an agent controls a car moving along a one-dimensional track via discrete force-control actions. At the two ends of the track are terminal flags: one corresponding to a positive outcome (good flag) and the other to a negative outcome (bad flag). Reaching either flag terminates the episode. Additionally, an intermediate information flag is placed along the track; when reached, it reveals the position of the good flag. While the task is conceptually similar to Heaven-Hell, key differences are the force-control and the position of the information flag.

Both the state and observation spaces are represented as three-dimensional real-valued vectors. The state includes the agent's position, velocity, and the position of the good flag. The observation mirrors the state structure, but the third component (i.e., the good flag's position) is masked, i.e., set to zero, when the agent is outside the observation range of the information flag; and the agent's velocity is always hidden. In the informed setting, we provide the agent its velocity as a privileged partial signal.

At each time step, the agent selects an action from a discrete set of seven force-control inputs: \texttt{LEFT\_HIGH}, \texttt{LEFT\_MEDIUM}, \texttt{LEFT\_LOW}, \texttt{RIGHT\_LOW}, \texttt{RIGHT\_MEDIUM}, \texttt{RIGHT\_HIGH}, and \texttt{NONE}.
These actions apply varying levels of acceleration to the left or right, or maintain zero acceleration.

The environment provides a sparse, terminal reward signal: a reward of 1.0 for reaching the good flag, and a reward of -1.0 for reaching the bad flag.

Optimal behavior requires the agent to first locate the information flag to identify the correct goal, then apply appropriate force controls to reach the good flag while avoiding the bad one.

\paragraph{Cleaner.}
Originally designed as a two-agent cooperative task, the Cleaner environment \cite{Jiang.2021} is adapted in this work to a single-agent control problem via fully centralized training and execution. In this formulation, the joint actions and observations are constructed via the Cartesian product of the corresponding spaces of the two individual agents. The environment is a maze-like $13 \times 13$ grid-world in which two robots must collectively cover and clean the entire area. The task is considered complete once every non-wall cell has been visited by at least one of the agents.

The full environment state is represented as a binary tensor of shape 
$13 \times 13 \times 5$, where each channel encodes the presence of: (i) a wall, (ii) a dirty cell, (iii) a cleaned cell, (iv) the first agent, and (v) the second agent. Each agent's local observation is a $3 \times 3 \times 3$ binary tensor that captures the immediate neighborhood centered around the agent, including information about walls, dirty cells, and clean cells. As privileged input, the critic receives a $13 \times 13 \times 5$ tensor encoding the agent's own position within the grid world, while masking out the position of the other agent by setting its corresponding cells to zero.

Each agent independently selects from four movement actions: \texttt{UP}, \texttt{DOWN}, \texttt{LEFT}, and \texttt{RIGHT}.
In the centralized setting, where both agents are controlled jointly, the action space is the Cartesian product of the individual action sets, yielding a total of 16 composite actions.

At each time step, the agent receives a reward proportional to the number of new cells cleaned during that step. The possible reward values are: a reward of 0.0 if no new cells are cleaned, a reward of 1.0 if one agent cleaned a new cell, and a reward of 2.0 if both agents cleaned a new cell.

\paragraph{Memory-Four-Rooms.}
The so-called Gridverse suite \cite{baisero2021gymgridverse} defines a collection of partially observable environments in which agents interact within structured grid-worlds. In this work, we consider the $7\times7$-Memory-Four-Room  and $9\times9$-Memory-Four-Room environments. While actions are encoded as categorical indices, both states and observations are structured representations comprising multiple semantically meaningful components. Importantly, these components differ between state and observation, and some are only available in the state representation. The key components are:

\begin{itemize}
    \item Grid component: A tensor of shape $3 \times 7 \times 7$ for $7\times7$-Memory-Four-Room or $3 \times 9 \times 9$ for $9\times9$-Memory-Four-Room, where each channel encodes a semantic property of the environment (e.g., cell type, cell color, or status). The observation includes a rotated, agent-centric $3\times 2 \times 3$-view of this grid rendered from the first-person perspective of the agent. Cells obstructed by walls are occluded in the observation.
    \item Agent-ID-Grid component: A binary matrix of size $7 \times 7$ or $9 \times 9$, respectively, indicating the agent's absolute position. This component is included only in the state.
    \item Agent component: A three-dimensional categorical array encoding the agent's position and orientation. In the state, this is expressed in absolute coordinates, while in the observation, it is provided relative to the agent's perspective and is thus constant, and not necessary for control.
\end{itemize}

The environment contains a good exit, a bad exit, and a beacon, each placed randomly at the start of each episode. The beacon shares its color with the good exit, and successful task completion requires the agent to first locate the beacon, memorize its color, and then navigate to the exit of matching color while avoiding the bad exit.

As privileged input, the critic is provided with an agent-centered $3 \times 3 \times 5$ tensor, offering an expanded view of the agent's local surroundings.

At each time step, the agent selects from the following discrete action set: \texttt{MOVE\_FORWARD}, \texttt{MOVE\_BACKWARD}, \texttt{MOVE\_LEFT}, \texttt{MOVE\_RIGHT}, \texttt{TURN\_LEFT}, \texttt{TURN\_RIGHT}, \texttt{PICK\_N\_DROP}, and \texttt{ACTUATE}.
The \texttt{MOVE\_} actions are interpreted relative to the agent's orientation, while \texttt{TURN\_} modifies the orientation itself. Although the action set includes \texttt{PICK\_N\_DROP} and \texttt{ACTUATE} for generality, these are no-ops in the Memory-Four-Rooms tasks, as there are no doors or pickable objects.

The reward signal is composed of the following terms: a living reward of -0.05 per time step, a reward of +5.0 for reaching the good exit, and a reward of -5.0 for reaching the bad exit.

\subsection{POPGym Environments}\label{app:environment_details_popgym}
This subsection presents the partially observable environments from the POPGym suite~\citep{Morad.2023} used to benchmark our informed asymmetric actor-critic method.

\paragraph{Higher Lower.}
The Higher Lower task is a partially observable card game, in which the agent must predict whether the next drawn card has a higher or lower rank than the current card. At each step, the agent observes the current card and selects one of two discrete actions, \texttt{HIGHER} or \texttt{LOWER}, corresponding to the prediction that the next card will be of higher or lower rank, respectively.

After each action, a new card is drawn and revealed, serving as the reference card for the next decision. The process continues over a deck of cards, requiring the agent to remember previous card values to make correct decisions.

The reward is scaled by the deck size: correct predictions yield a positive reward, incorrect predictions a negative reward, and ties (identical ranks) result in zero reward.

We consider four variants of privileged information for this environment. The \textit{full-state} signal provides the complete internal card-count representation of the environment state. The \textit{expert} corresponds to an expert policy revealing whether the next card is higher or lower than the current card, effectively encoding the optimal prediction. The \textit{previous-card} signal provides access to the previously observed card rank, while the \textit{both-cards} signal additionally includes the current observed card together with the previous one.

\paragraph{Repeat First.}
Repeat First is a partially observable card-game-based memory task. At the beginning of each episode, the agent observes an initial card suit that must be memorized throughout the episode. Subsequently, the agent receives a sequence of additional card suits and, at each step, must output the suit of the initial card.

The environment therefore requires the agent to retain information over long time horizons while processing distracting intermediate observations. At each time step, the agent selects one of four discrete actions corresponding to the card suits \texttt{SPADES}, \texttt{HEARTS}, \texttt{DIAMONDS}, and \texttt{CLUBS}.

The reward is scaled by the deck size: correctly predicting the initial card yields a positive reward, whereas incorrect predictions yield a negative reward.

We consider four variants of privileged information for this environment. The \textit{hand} signal reveals the suit of the current card in hand. The \textit{dealt} signal provides statistics over already dealt cards (i.e., frequency of observed suits). The \textit{full-state} signal combines both sources of information. Finally, the \textit{first-card} signal provides access to the suit of the first observed card in the sequence.

\paragraph{Repeat Previous.}
Repeat Previous is a partially observable card-game-based memory task. The agent observes a sequence of card suits and, at each time step, must predict the suit observed $k$ steps earlier in the sequence. In our experiments, we use $k=4$, requiring the agent to maintain a memory of past observations while processing continuously arriving inputs.

At each step, the agent selects one of four discrete actions corresponding to the four standard card suits: \texttt{SPADES}, \texttt{HEARTS}, \texttt{DIAMONDS}, and \texttt{CLUBS}. Rewards are scaled by the deck size: correctly recalling the target suit yields a positive reward, whereas incorrect predictions yield a negative reward.

We consider the same four variants of privileged information as in the Repeat First task, with the only difference that the \textit{first-card} signal is replaced by the \textit{previous-card} signal.

\paragraph{Count Recall.}
Count Recall is a partially observable card-game-based memory task, designed to test order-agnostic memory and counting ability. At each time step, the agent observes a pair of cards corresponding to a currently drawn card and a query card. The task requires the agent to predict how many times the queried card has appeared in the history of previously observed cards.

At each step, the agent selects an integer-valued action corresponding to the predicted count. Rewards are scaled by the deck size: correct predictions of the queried count yield a positive reward, while incorrect predictions yield a negative reward.

We consider six variants of privileged information for this environment. The \textit{all-values} signal provides access to the underlying identities of all cards in the deck. The \textit{is-face-up} signal indicates which cards are currently revealed. The \textit{first-card} and \textit{second-card} signals provide the identities of the most recently selected cards. The \textit{flipped-cards} signal provides both of the last two selected cards jointly. Finally, the \textit{full-state} signal combines card identities, visibility information, and the most recent selection history.

\paragraph{Concentration.}
Concentration is a partially observable card-matching task, inspired by the classic memory game of the same name. A full deck of cards is initially placed face-down, and the agent interacts with the environment by selecting cards to flip.

At each time step, the agent chooses a card to reveal. If two consecutively selected cards match (according to the chosen matching criterion, e.g., in rank or color), the pair is removed from the game and the agent receives a positive reward. Otherwise, the selection is penalized and the cards are flipped back face-down. 

We consider four variants of privileged information for this environment. The \textit{all-values} signal provides full access to the underlying card identities. The \textit{is-face-up} signal indicates which cards are currently revealed. The \textit{first-card} and \textit{second-card} signals provide access to the most recently selected cards.

\paragraph{Position Cart Pole.} 
The position-only  Cart Pole task is a partially observable variant of the classic control benchmark~\citep{Barto.1983}. The environment implements the standard Cart Pole mechanics, but the agent does not observe cart position and pole angle velocities directly. Instead, it receives only the position and pole angle, requiring it to infer missing dynamical information over time.

At each time step, the agent selects a discrete action corresponding to the standard Cart Pole control inputs: \texttt{PUSH-LEFT} or \texttt{PUSH-RIGHT}. Successful balancing yields a small positive reward scaled by the episode length, while failure results in a negative reward.

We consider four variants of privileged information for this environment. The \textit{x-velocity} signal provides access to the cart's horizontal velocity. The \textit{angle-velocity} signal provides access to the angular velocity of the pole. The \textit{both-velocities} signal provides both velocity components jointly. Finally, the \textit{full-state} signal reveals the full environment state.

\subsection{Synthetic Informed POMDPs}\label{app:synthetic-informed-pomdp}
We generate a distribution of synthetic informed POMDP instances with a finite state space of size $|\mathcal{S}|$, a discrete action space of size $|\mathcal{A}|$, and continuous observation and information spaces. Following the methodology of \citet{Lavet.2019}, transition probabilities are randomly assigned by setting each $(s_t, a_t, s_{t+1})$-entry to zero with probability $0.75$, and sampling uniformly from $[0,1]$ otherwise. To ensure valid transitions, we assign a non-zero probability to a randomly chosen next state whenever all transitions from a given state-action pair are initially zero. We then normalize the probabilities so that they sum to one. Each state is associated with a Gaussian feature vector $s_t \in \R^{d_s}$ with $d_s \in \mathbb{Z}_{>0}$: $s_t \sim \mathcal{N}(\mathbf{0},\sigma_s^2 \mathbf{I}_{d_s})$, where $\mathbf{0}$ is the $d_s$-dimensional zero vector, $\mathbf{I}_{d_s}$ the $d_s \times d_s$ identity matrix, and $\sigma_s^2$ the variance. Rewards are linear functions of the state features, with reward weights $w_r \in \R^{d_s}$ sampled uniformly from $[-1, 1]$ at initialization.

We first construct the privileged information $i_t \in \R^{d_i}$ by selecting a subset of $1 \leq d_{i} \leq d_s$ state features using a binary masking vector $x_i \in \{0,1\}^{d_s}$ and applying a selection matrix $W_i \in \{0,1\}^{d_i \times d_s}$:
\begin{equation*}
    i_t = W_i(x_{i} \odot s_t)
\end{equation*}
where $\odot$ denotes element-wise multiplication. Observations $o_t \in \R^{d_o}$ are generated by masking a subset of $1 \leq d_o \leq d_i$ features from $i_t$ using a binary masking vector $x_o \in \{0,1\}^{d_i}$, applying a selection matrix $W_o \in \{0,1\}^{d_o \times d_i}$ to the masked privileged signal, and adding Gaussian noise:
\begin{equation*}
    o_t = W_o(x_{o} \odot i_t) + \beta_{o} \epsilon_o, \qquad \epsilon_o \sim \mathcal{N}(\mathbf{0},\sigma^2_o\mathbf{I}_{d_o}),
\end{equation*}
where $\beta_{o} \geq 0$ modulates the observation noise and $\sigma_o^2$ is the variance.
\section{Model Architectures and Hyperparameters}\label{app:implentation_details}
In this section, we describe the actor and critic architectures as well as the hyperparameters used in our experiments.

\subsection{Navigation Tasks}\label{app:implentation_details_benchmark}
We use the implementation of environments~\citep{baisero2021asymrlpo, baisero2019gympomdps, baisero2021gymgridverse, nguyen2021pomdpdomains} and actor-critic methods provided by \citet{Baisero.2021}, extending them to the informed setting.

In each task, a 128-dimensional single-layer gated recurrent unit (GRU) encodes the concatenated action and observation features into a history representation. While the actor and critic networks share this architectural component, their parameters are maintained separately. The subsequent actor and critic network components differ across environments as follows:

\begin{itemize}
    \item For the Heaven-Hell-3 and Shopping-5 tasks, we employ a 64-dimensional embedding model to represent states, actions, and observations. Both the actor and critic networks consist of two-layer feedforward neural networks with 512 and 256 units, respectively, using ReLU activations in the hidden layers and a linear output layer.
    \item For the Car-Flag and Cleaner environments, actions are represented as one-hot encodings of their respective categorical indices. As the state and observation representations provided by these environments are already flattened and structurally simple, no additional embedding is applied. The actor's and critic's subsequent networks adopt the same architecture used for the Heaven-Hell-3 and Shopping-5 tasks.
    \item For the Memory-Four-Room tasks, the $3 \times 2 \times 3$ observation tensors are initially processed by an embedding layer that maps each categorical value to an 8-dimensional vector. The resulting embedded tensor is then flattened into a 144-dimensional feature vector, which serves as the observation input to both the actor and critic networks. Actions, provided as categorical indices, are represented using one-dimensional embedding layers. For the states, the grid component is first embedded and then concatenated with the agent-ID grid. A three-layer convolutional network subsequently processes this combined input. The output of the convolutional network is concatenated with the agent components. The actor and critic networks each consist of a hidden layer with 512 units using ReLU activation, followed by a linear output layer.
\end{itemize}
We encode the privileged information analogously to the observations. The embedded privileged information is then concatenated with the latent history representation before being passed to the task-specific feedforward neural network.

\begin{table}[h]
 \caption{Hyperparameters for the benchmark navigation environments.}
  \label{tab:hyperparameters_benchmark}
 \begin{center}
    \begin{small}
      \begin{sc}   
   \begin{tabular}{l r r r}
\toprule
Environment & $\eta_{\pi}$ & $\eta_{\hat{V}}$ & $\lambda_0$ \\
\midrule
Heaven-Hell-3 & 0.001 & 0.001 & 0.1\\
Shopping-5 & 0.001 & 0.0003 & 3.0\\
Car-Flag & 0.001 & 0.001 & 0.03\\
Cleaner & 0.001 & 0.001 & 1.0\\
$7\times7$-Memory-Four-Room & 0.0003 & 0.001 & 0.1\\
$9\times9$-Memory-Four-Room & 0.001 & 0.0003 & 0.3\\
\bottomrule
\end{tabular}
 \end{sc}
    \end{small}
  \end{center}
\end{table}

For each environment and method, we use the hyperparameter values recommended by \citet{Baisero.2021} to ensure comparability with prior work. Table~\ref{tab:hyperparameters_benchmark} summarizes the actor learning rate $\eta_{\pi}$, critic learning rate $\eta_{\hat{V}}$, and the initial negative-entropy weight $\lambda_0$ selected for each environment. Additionally, the following model hyperparameters are applied across all environments: discount factor is set to $\gamma = 0.99$, episodes are automatically terminated if they exceed 100 time steps; two episodes are sampled per gradient update; a frozen target network is used to stabilize critic training, with target parameters updated every 10,000 time steps; and the negative-entropy weight $\lambda$ decays linearly over 2 million time steps to a final value equal to one-tenth of $\lambda_0$.

\subsection{POPGym Environments}\label{app:implentation_details_popgym}
We use the environment implementations of \citet{Morad.2023}, and adopt their recommended actor and critic architectures, extending them to the informed setting.

For each environment and method, a 256-dimensional single-layer GRU encodes the interaction history into a fixed-length representation. Before being processed by the GRU, observations are projected into a 128-dimensional zero-mean, unit-variance representation using a linear layer followed by layer normalization and a LeakyReLU activation. The GRU hidden state is projected onto a 128-dimensional feature vector, which is fed into separate actor and critic heads. Both heads are implemented as two-layer feedforward networks with 128 hidden units and LeakyReLU activations. For the A2C variants with an informed or full-state critic, we encode the privileged signal analogously to the observations. The resulting embedding is concatenated with the history representation before being passed to the critic head.

We use the hyperparameter values recommended by \citet{Morad.2023} for the actor-critic architecture to ensure comparability with prior work. We set the learning rate to $\eta = 5 \times 10^{-4}$, use backpropagation-through-time truncation length of $1{,}024$, a zero entropy regularization weight $\lambda = 0$, and a discount factor of $\gamma = 0.99$. For the encoding of the privileged signal, we use an embedding size of $64$ for the card-game tasks and an embedding size of $128$ for the position-only Cart Pole environment.

\subsection{Synthetic Informed POMDPs}\label{app:implentation_details_synthetic}
For the synthetic informed POMDP environments, the actor is implemented as a single-layer GRU with hidden size 64, followed by a linear readout layer. 
Both the symmetric and informed history critics use separate single-layer GRU with hidden size 64 to produce fixed-length representations of the interaction history, followed by a linear readout layer that outputs the value estimate. For the informed history critic, the GRU hidden state is concatenated with the privileged signal before being passed to the linear layer. The learning rate for both the actor and the critics is set to $\eta = 1 \times 10^{-4}$, and we use a fixed discount factor of $\gamma = 0.99$ across all environments.
\section{Wall-Clock Runtimes}\label{app:runtimes}
All experiments were conducted on a cluster node equipped with 96 cores running at 3.0 GHz and 93.75 GB of RAM allocated per task. Tables~\ref{tab:runtimes_experiments_navigation} and~\ref{tab:runtimes_experiments_popgym} report the wall-clock training times for different A2C variants and privileged signals across the navigation and PopGym environments, respectively. Table~\ref{tab:runtime_informativeness_tests} summarizes the wall-clock runtimes for the experiments in Section~\ref{sec:experiments_informativeness}.

\begin{table*}[h]
\caption{Wall-clock training time across the six benchmark navigation tasks for different A2C variants. We report absolute training time (in seconds) and relative runtime w.r.t.\@ the symmetric A2C (100\%). Means and standard deviations are computed across 20 independent runs.}
\label{tab:runtimes_experiments_navigation}
 \begin{center}
    \begin{small}
      \begin{sc}
\begin{tabular}{l l c c}
\toprule
Environment
& Algorithm
& Wall-Clock Time (\normalfont s)
& Relative Time (\%) \\
\midrule

\multirow{4}{*}{Memory-Four-Rooms-7x7}
& a2c
& \normalfont 2.18e+04 $\pm$ \normalfont 1.52e+03
& \normalfont 100.00 \\

& informed-asym-a2c
& \normalfont 2.32e+04 $\pm$ \normalfont 1.66e+03
& \normalfont 106.04 \\

& asym-a2c-s
& \normalfont 2.88e+04 $\pm$ \normalfont 2.29e+03
& \normalfont 132.03 \\

& asym-a2c-hs
& \normalfont 3.45e+04 $\pm$ \normalfont 2.28e+03
& \normalfont 158.02 \\
\midrule

\multirow{4}{*}{Memory-Four-Rooms-9x9}
& a2c
& \normalfont 2.13e+04 $\pm$ \normalfont 1.70e+03
& \normalfont 100.00 \\

& informed-asym-a2c
& \normalfont 2.38e+04 $\pm$ \normalfont 1.19e+03
& \normalfont 111.62 \\

& asym-a2c-s
& \normalfont 2.76e+04 $\pm$ \normalfont 1.54e+03
& \normalfont 129.48 \\

& asym-a2c-hs
& \normalfont 3.53e+04 $\pm$ \normalfont 1.10e+03
& \normalfont 165.93 \\
\midrule

\multirow{4}{*}{Heaven-Hell-3}
& asym-a2c-s
& \normalfont 5.81e+03 $\pm$ \normalfont 1.23e+03
& \normalfont 72.75 \\

& a2c
& \normalfont 7.99e+03 $\pm$ \normalfont 1.47e+03
& \normalfont 100.00 \\

& informed-asym-a2c
& \normalfont 8.97e+03 $\pm$ \normalfont 1.66e+03
& \normalfont 112.27 \\

& asym-a2c-hs
& \normalfont 9.15e+03 $\pm$ \normalfont 1.34e+03
& \normalfont 114.45 \\
\midrule

\multirow{4}{*}{Shopping-5}
& asym-a2c-s
& \normalfont 2.16e+03 $\pm$ \normalfont 2.94e+02
& \normalfont 56.71 \\

& a2c
& \normalfont 3.81e+03 $\pm$ \normalfont 7.73e+02
& \normalfont 100.00 \\

& asym-a2c-hs
& \normalfont 4.28e+03 $\pm$ \normalfont 5.81e+02
& \normalfont 112.27 \\

& informed-asym-a2c
& \normalfont 4.35e+03 $\pm$ \normalfont 3.08e+02
& \normalfont 113.96 \\
\midrule

\multirow{4}{*}{Car-Flag}
& asym-a2c-s
& \normalfont 4.38e+03 $\pm$ \normalfont 1.08e+03
& \normalfont 72.26 \\

& informed-asym-a2c
& \normalfont 5.94e+03 $\pm$ \normalfont 1.44e+03
& \normalfont 97.93 \\

& a2c
& \normalfont 6.07e+03 $\pm$ \normalfont 1.67e+03
& \normalfont 100.00 \\

& asym-a2c-hs
& \normalfont 6.20e+03 $\pm$ \normalfont 1.74e+03
& \normalfont 102.19 \\
\midrule

\multirow{4}{*}{Cleaner}
& asym-a2c-s
& \normalfont 5.39e+03 $\pm$ \normalfont 3.42e+02
& \normalfont 90.66 \\

& a2c
& \normalfont 5.95e+03 $\pm$ \normalfont 4.46e+02
& \normalfont 100.00 \\

& asym-a2c-hs
& \normalfont 6.20e+03 $\pm$ \normalfont 2.46e+02
& \normalfont 104.26 \\

& informed-asym-a2c
& \normalfont 6.23e+03 $\pm$ \normalfont 3.29e+02
& \normalfont 104.68 \\

\bottomrule
\end{tabular}

      \end{sc}
    \end{small}
  \end{center}
\end{table*}

\begin{table*}[t]
\caption{Wall-clock training time across the six POPGym environments for different privileged signals. We report absolute training time (in seconds) and relative runtime w.r.t.\@ the symmetric A2C (100\%). Means and standard deviations are computed across 20 independent runs.}
\label{tab:runtimes_experiments_popgym}
 \begin{center}
    \begin{small}
      \begin{sc}
\begin{tabular}{l l c c}
\toprule
Environment
& Privileged Signal
& Wall-Clock Time (\normalfont s)
& Relative Time (\%) \\
\midrule

\multirow{5}{*}{Higher Lower}
& None
& \normalfont 8.76e+03 $\pm$ \normalfont 1.36e+02
& 100.00 \\

& Full State
& \normalfont 9.46e+03 $\pm$ \normalfont 1.04e+02
& 107.93 \\

& previous-card
& \normalfont 9.51e+03 $\pm$ \normalfont 7.47e+01
& 108.51 \\

& both-cards
& \normalfont 9.52e+03 $\pm$ \normalfont 6.90e+01
& 108.66 \\

& expert
& \normalfont 9.53e+03 $\pm$ \normalfont 7.82e+01
& 108.80 \\
\midrule

\multirow{5}{*}{Repeat Previous}
& None
& \normalfont 9.12e+03 $\pm$ \normalfont 8.92e+02
& 100.00 \\

& hand
& \normalfont 9.66e+03 $\pm$ \normalfont 1.00e+03
& 105.90 \\

& dealt
& \normalfont 9.76e+03 $\pm$ \normalfont 1.04e+03
& 106.92 \\

& Full State
& \normalfont 9.91e+03 $\pm$ \normalfont 8.90e+02
& 108.62 \\

& previous-card
& \normalfont 9.97e+03 $\pm$ \normalfont 8.70e+02
& 109.30 \\
\midrule

\multirow{5}{*}{Repeat First}
& None
& \normalfont 9.14e+03 $\pm$ \normalfont 7.97e+02
& 100.00 \\

& dealt
& \normalfont 9.70e+03 $\pm$ \normalfont 9.52e+02
& 106.10 \\

& hand
& \normalfont 9.76e+03 $\pm$ \normalfont 8.91e+02
& 106.85 \\

& Full State
& \normalfont 9.80e+03 $\pm$ \normalfont 9.87e+02
& 107.19 \\

& first-card
& \normalfont 9.95e+03 $\pm$ \normalfont 9.31e+02
& 108.86 \\
\midrule

\multirow{6}{*}{Count Recall}
& None
& \normalfont 1.18e+04 $\pm$ \normalfont 7.38e+02
& 100.00 \\

& expert
& \normalfont 1.22e+04 $\pm$ \normalfont 4.24e+02
& 103.88 \\

& freq-cards-seen
& \normalfont 1.23e+04 $\pm$ \normalfont 4.95e+02
& 104.79 \\

& freq-cards-queried
& \normalfont 1.24e+04 $\pm$ \normalfont 4.88e+02
& 105.12 \\

& Full State
& \normalfont 1.24e+04 $\pm$ \normalfont 4.61e+02
& 105.26 \\

& last-obs
& \normalfont 1.26e+04 $\pm$ \normalfont 4.53e+02
& 107.47 \\
\midrule

\multirow{7}{*}{Concentration}
& None
& \normalfont 6.67e+03 $\pm$ \normalfont 4.80e+02
& 100.00 \\

& flipped-cards
& \normalfont 6.90e+03 $\pm$ \normalfont 4.40e+02
& 103.48 \\

& second-card
& \normalfont 6.97e+03 $\pm$ \normalfont 3.95e+02
& 104.50 \\

& first-card
& \normalfont 7.04e+03 $\pm$ \normalfont 4.43e+02
& 105.46 \\

& all-values
& \normalfont 7.15e+03 $\pm$ \normalfont 4.49e+02
& 107.17 \\

& Full State
& \normalfont 7.17e+03 $\pm$ \normalfont 3.58e+02
& 107.44 \\

& is-face-up
& \normalfont 7.25e+03 $\pm$ \normalfont 2.94e+02
& 108.73 \\
\midrule

\multirow{5}{*}{Position Cart Pole}
& None
& \normalfont 8.87e+03 $\pm$ \normalfont 7.90e+01
& 100.00 \\

& Full State
& \normalfont 9.06e+03 $\pm$ \normalfont 1.02e+02
& 102.14 \\

& both-velocities
& \normalfont 1.01e+04 $\pm$ \normalfont 2.51e+02
& 113.34 \\

& x-velocity
& \normalfont 1.02e+04 $\pm$ \normalfont 6.57e+01
& 115.48 \\

& angle-velocity
& \normalfont 1.10e+04 $\pm$ \normalfont 1.83e+01
& 123.62 \\

\bottomrule
\end{tabular}
      \end{sc}
    \end{small}
  \end{center}
\end{table*}

\begin{table*}[h!]
\caption{Wall-clock runtime comparison of the two informativeness tests across different privileged signals $i_t$. Means and standard deviations are computed across 10 independent runs.}
\label{tab:runtime_informativeness_tests}
 \begin{center}
    \begin{small}
      \begin{sc}
\begin{tabular}{l c c}
\toprule
\multirow{2}{*}{Privileged Signal}
& \multicolumn{2}{c}{Wall-Clock Runtime (\normalfont s)} \\
\cmidrule(lr){2-3}
& \normalfont Residual Informativeness Test
& \normalfont Prediction Informativeness Test \\
\midrule

$i_t = [s_t^1, s_t^2]$
& \normalfont 939.70 $\pm$ \normalfont 43.76
& \normalfont 898.24$\pm$ \normalfont 65.57 \\

$i_t = [s_t^1, s_t^2, s_t^3]$
& \normalfont 936.96 $\pm$ \normalfont 42.71
& \normalfont 906.89$\pm$ \normalfont 56.61 \\

$i_t = [s_t^1, s_t^2, s_t^4]$
& \normalfont 964.76 $\pm$ \normalfont 39.17
& \normalfont 910.51$\pm$ \normalfont 48.68 \\

$i_t = [s_t^1, s_t^2, s_t^5]$
& \normalfont 950.20 $\pm$ \normalfont 31.40
& \normalfont 913.10$\pm$ \normalfont 69.74 \\

$i_t = [s_t^1, s_t^2, s_t^3, s_t^4]$
& \normalfont 939.95 $\pm$ \normalfont 46.23
& \normalfont 890.51$\pm$ \normalfont 75.44 \\

$i_t = [s_t^1, s_t^2, s_t^3, s_t^5]$
& \normalfont 931.06 $\pm$ \normalfont 30.45
& \normalfont 889.61$\pm$ \normalfont 82.82 \\

$i_t = [s_t^1, s_t^2, s_t^4, s_t^5]$
& \normalfont 919.11 $\pm$ \normalfont 50.37
& \normalfont 905.27$\pm$ \normalfont 55.26 \\

$i_t = [s_t^1, s_t^2, s_t^3, s_t^4, s_t^5]$
& \normalfont 946.33 $\pm$ \normalfont 21.69
& \normalfont 896.10$\pm$ \normalfont 48.37 \\

\bottomrule
\end{tabular}
      \end{sc}
    \end{small}
  \end{center}
\end{table*}
\clearpage
\section{Additional Experiments}\label{app:ablations}
This section summarizes our additional empirical results.

\subsection{Learning Performance on POPGym Environments}
Table~\ref{tab:summary_popgym_learning} reports the final performance (mean episodic return) and area under the learning curve (AUC) after 2 million steps for actor-critic variants with access to different privileged signals on the POPGym environments.
\begin{table*}[h!]
\caption{Learning performance of actor-critic variants with different privileged signals on the POPGym environments, computed as mean $\pm$ standard deviation over 20 independent runs.}
\label{tab:summary_popgym_learning}
 \begin{center}
    \begin{small}
      \begin{sc}
\begin{tabular}{l l c c}
\toprule
Environment
& Privileged Signal
& Final Performance
& AUC \\
\midrule

\multirow{5}{*}{Higher Lower}
& Full State
& \normalfont 4.90e-01 $\pm$ \normalfont 6.91e-02
& \normalfont 9.59e+05 $\pm$ \normalfont 1.99e+05 \\

& expert
& \normalfont 4.92e-01 $\pm$ \normalfont 9.53e-02
& \normalfont 9.59e+05 $\pm$ \normalfont 1.99e+05 \\

& both-cards
& \normalfont 4.81e-01 $\pm$ \normalfont 9.23e-02
& \normalfont 9.58e+05 $\pm$ \normalfont 1.97e+05 \\

& previous-card
& \normalfont 5.02e-01 $\pm$ \normalfont 1.20e-01
& \normalfont 9.55e+05 $\pm$ \normalfont 1.98e+05 \\

& None
& \normalfont 4.83e-01 $\pm$ \normalfont 7.82e-02
& \normalfont 9.38e+05 $\pm$ \normalfont 1.26e+05 \\
\midrule

\multirow{5}{*}{Repeat Previous}
& previous-card
& \normalfont 3.79e-01 $\pm$ \normalfont 3.02e-01
& \normalfont 6.81e+05 $\pm$ \normalfont 5.40e+05 \\

& dealt
& \normalfont 4.77e-01 $\pm$ \normalfont 7.46e-02
& \normalfont 8.28e+05 $\pm$ \normalfont 2.88e+05 \\

& None
& \normalfont 4.60e-01 $\pm$ \normalfont 1.20e-01
& \normalfont 8.36e+05 $\pm$ \normalfont 2.64e+05 \\

& Full State
& \normalfont 5.06e-01 $\pm$ \normalfont 7.44e-02
& \normalfont 8.75e+05 $\pm$ \normalfont 2.58e+05 \\

& hand
& \normalfont 5.02e-01 $\pm$ \normalfont 3.70e-02
& \normalfont 9.14e+05 $\pm$ \normalfont 1.91e+05 \\
\midrule

\multirow{5}{*}{Repeat First}
& None
& \normalfont 8.29e-01 $\pm$ \normalfont 5.34e-01
& \normalfont 6.63e+05 $\pm$ \normalfont 1.18e+06 \\

& hand
& \normalfont 5.51e-01 $\pm$ \normalfont 8.15e-01
& \normalfont 5.38e+05 $\pm$ \normalfont 1.34e+06 \\

& first-card
& \normalfont 3.45e-01 $\pm$ \normalfont 9.20e-01
& \normalfont 2.86e+05 $\pm$ \normalfont 1.44e+06 \\

& dealt
& \normalfont 8.76e-01 $\pm$ \normalfont 4.52e-01
& \normalfont 1.66e+05 $\pm$ \normalfont 1.41e+06 \\

& Full State
& \normalfont 9.00e-01 $\pm$ \normalfont 4.47e-01
& \normalfont 1.96e+04 $\pm$ \normalfont 1.47e+06 \\
\midrule

\multirow{6}{*}{Count Recall}
& freq-cards-queried
& \normalfont 7.18e-01 $\pm$ \normalfont 1.41e-01
& \normalfont 1.57e+06 $\pm$ \normalfont 2.19e+05 \\

& last-obs
& \normalfont 7.75e-01 $\pm$ \normalfont 1.19e-01
& \normalfont 1.61e+06 $\pm$ \normalfont 2.27e+05 \\

& Full State
& \normalfont 7.53e-01 $\pm$ \normalfont 1.10e-01
& \normalfont 1.63e+06 $\pm$ \normalfont 1.93e+05 \\

& freq-cards-seen
& \normalfont 7.69e-01 $\pm$ \normalfont 1.32e-01
& \normalfont 1.66e+06 $\pm$ \normalfont 2.07e+05 \\

& None
& \normalfont 8.45e-01 $\pm$ \normalfont 1.13e-01
& \normalfont 1.76e+06 $\pm$ \normalfont 1.68e+05 \\

& expert
& \normalfont 8.84e-01 $\pm$ \normalfont 1.01e-01
& \normalfont 1.82e+06 $\pm$ \normalfont 1.29e+05 \\
\midrule

\multirow{7}{*}{Concentration}
& all-values
& \normalfont 1.45e-01 $\pm$ \normalfont 7.59e-02
& \normalfont 3.49e+05 $\pm$ \normalfont 2.13e+05 \\

& second-card
& \normalfont 1.57e-01 $\pm$ \normalfont 1.11e-01
& \normalfont 3.79e+05 $\pm$ \normalfont 2.24e+05 \\

& first-card
& \normalfont 1.29e-01 $\pm$ \normalfont 9.48e-02
& \normalfont 3.89e+05 $\pm$ \normalfont 2.43e+05 \\

& None
& \normalfont 2.38e-01 $\pm$ \normalfont 1.39e-01
& \normalfont 4.22e+05 $\pm$ \normalfont 2.32e+05 \\

& is-face-up
& \normalfont 2.20e-01 $\pm$ \normalfont 1.06e-01
& \normalfont 4.71e+05 $\pm$ \normalfont 2.73e+05 \\

& flipped-cards
& \normalfont 2.59e-01 $\pm$ \normalfont 2.79e-01
& \normalfont 4.91e+05 $\pm$ \normalfont 4.69e+05 \\

& Full State
& \normalfont 4.24e-01 $\pm$ \normalfont 4.03e-01
& \normalfont 7.00e+05 $\pm$ \normalfont 5.60e+05 \\
\midrule

\multirow{5}{*}{Position Cart Pole}
& angle-velocity
& \normalfont 5.82e-01 $\pm$ \normalfont 2.95e-01
& \normalfont 8.68e+05 $\pm$ \normalfont 4.64e+05 \\

& both-velocities
& \normalfont 5.37e-01 $\pm$ \normalfont 3.45e-01
& \normalfont 8.48e+05 $\pm$ \normalfont 4.78e+05 \\

& Full State
& \normalfont 5.69e-01 $\pm$ \normalfont 2.61e-01
& \normalfont 8.26e+05 $\pm$ \normalfont 4.20e+05 \\

& x-velocity
& \normalfont 1.65e-01 $\pm$ \normalfont 1.05e-01
& \normalfont 2.57e+05 $\pm$ \normalfont 1.49e+05 \\

& None
& \normalfont 5.55e-02 $\pm$ \normalfont 2.09e-02
& \normalfont 1.53e+05 $\pm$ \normalfont 7.50e+04 \\

\bottomrule
\end{tabular}

      \end{sc}
    \end{small}
  \end{center}
\end{table*}

\subsection{Effect of Privileged Signal Choice on Policy Performance}
We study how different privileged signal generators affect the quality of the learned policy. To this end, we train a symmetric actor-critic baseline and multiple informed asymmetric actor-critic (IAAC) agents in the same fixed randomly sampled environment, each IAAC variant using a different privileged signal as described in Section~\ref{sec:experiments_informativeness}. Each model is trained for 15,000 gradient steps, where every step uses a batch of 16 episodes. Every 50 gradient steps, we evaluate the current policy by estimating the mean episodic return over 50 evaluation episodes. All experiments are repeated with 10 random seeds. Figure~\ref{fig:training-performance} presents the actor loss, critic loss and episodic return on evaluation episodes during training for selected actor-critic variants, smoothed using a moving average over $500$ episodes.

\begin{figure}[h!]
    \begin{center}
    \includegraphics[width=\textwidth]{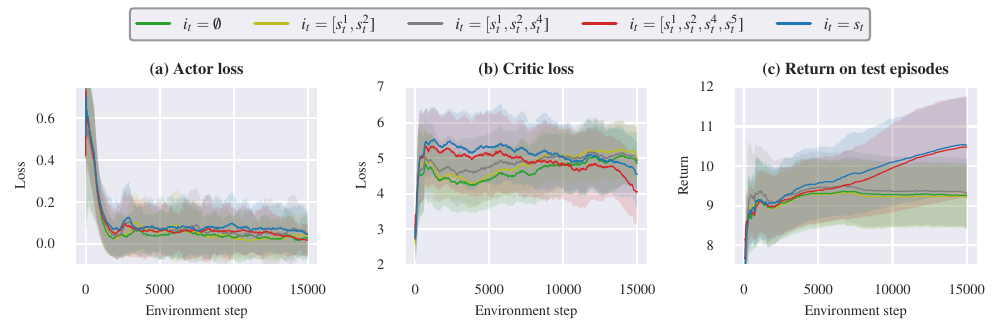}
    \caption{Learning performance of selected actor-critic variants on a randomly sampled informed POMDP. The curves depict (a) the actor loss, (b) the critic loss, and (c) episodic returns evaluated on 50 test episodes. The results are smoothed using a moving average over 500 environment steps, with the means and standard deviations computed across 10 independent runs.}
    \label{fig:training-performance}
    \end{center}
\end{figure}

For each privileged signal generator, we compute the average gain in mean episodic return relative to the symmetric critic across all evaluation episodes during training, averaged over the 10 runs. In Figure~\ref{fig:scatter-plott}, we plot this performance gain against the corresponding effect sizes of the two informativeness measures (observed HSIC and value prediction gain) to illustrate the relationship between signal informativeness and control performance. The informativeness tests are conducted offline using separate episodes collected with a random policy, allowing us to assess how well the proposed criteria serve as a proxy for downstream policy improvement.

The upper-right quadrant of the scatter plots highlights the most relevant privileged signal generators, combining high estimated informativeness with strong gains in mean episodic return. Signals such as $i_t = [s_t^1, s_t^2, s_t^4, s_t^5]$ and $i_t = s_t$ are consistently identified as informative (low $p$-values: $0.05$-residual informativeness and $(0,0.05)$- or $(0,0.1)$-prediction informativeness) and yield the largest performance improvements. In contrast, some signals (e.g., $i_t = [s_t^1, s_t^2, s_t^3, s_t^4]$ or $i_t = [s_t^1, s_t^2, s_t^4]$) show strong effect sizes but smaller gains in episodic returns, while $i_t = [s_t^1, s_t^2, s_t^3, s_t^5]$ provides higher episodic gains despite lower measured informativeness. Signals identical to the observations ($i_t = o_t = [s_t^1, s_t^2]$) remain uninformative, in line with their limited contribution to policy improvement.

\begin{figure}[h!]
    \begin{center}
    \includegraphics[width=\textwidth]{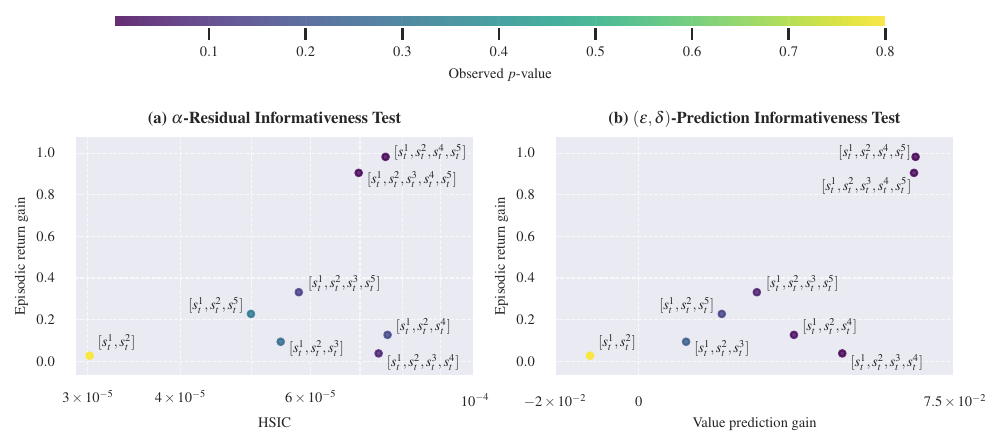}
    \caption{Relationship between estimated signal informativeness and downstream control performance. Each point corresponds to a distinct privileged signal configuration. The x-axis shows the effect size of the (a) residual-based informativeness measure (observed HSIC) and (b) post-hoc prediction informativeness measure (value prediction gain), while the y-axis shows the average gain in mean episodic return relative to a symmetric actor-critic baseline, aggregated over all evaluation episodes and averaged across 10 runs. Color of the point indicates the corresponding $p$-value of the respective informativeness test.}
    \label{fig:scatter-plott}
    \end{center}
\end{figure}

These results show that, while the proposed informativeness criteria are designed to quantify the predictive value of privileged signals for return estimation, they also serve as a reasonable proxy for downstream policy improvement. Notably, the tests are performed on episodes collected with a random policy, yet signals identified as informative tend to yield higher gains in episodic return during policy training. This suggests that value-based informativeness captures essential aspects of signals that facilitate control, while also highlighting the need for future research to develop criteria that directly assess a signal's potential to enhance policy performance beyond value prediction. Interestingly, the full-state signal $i_t = s_t$ does not always yield the highest episodic return gains, further highlighting the practical relevance of the informed asymmetric actor-critic framework, which can exploit any state-dependent privileged information beyond full-state access. 

\subsection{Noisy observations}
In addition to the noiseless observation setting studied in Section~\ref{sec:experiments_informativeness}, we assess our informativeness criteria under noisy observation configurations. Recall that we generate observations by masking a subset of features from $i_t$ and adding Gaussian noise modulated by $\beta_{o} \geq 0$.

Tables~\ref{tab:ablation_residual_noise_levels} and~\ref{tab:ablation_post-hoc_noise_levels} report results for multiple privileged signal generators under noise levels $\beta_{o}=0.1$ and $\beta_{o}=0.5$, averaged over ten independent runs, using the $\alpha$-residual-informativeness criterion and the post-hoc $(0, \delta)$-prediction-informativeness measure, respectively. Increasing observation noise from $\beta_{o}=0.0$ to $\beta_{o}=0.1$ or $\beta_{o}=0.5$ has only a minor impact on the relative ranking of privileged signals under both informativeness criteria. Signals that include the most return-relevant state components (notably those containing $s_t^4$) remain consistently identified as informative, with low $p$-values in both the residual-based and prediction tests. In contrast, signals lacking these components (e.g., $[s_t^1, s_t^2]$) continue to appear uninformative across noise levels. While effect sizes tend to increase slightly with higher noise for the truly informative signals the statistical conclusions remain generally unchanged.

This trend is intuitive. As observation noise increases, the history $h_t$ becomes a less accurate proxy for the latent state, making the history-conditioned belief over states more uncertain. Privileged signals that provide additional state-conditioned information can therefore explain a larger fraction of the remaining return variance, increasing their relative informativeness compared to the history alone.

An exception is $i_t = [s_t^1, s_t^2, s_t^3]$ under the residual-based criterion: its mean $p$-value increases noticeably with higher noise, making it less likely to be deemed as informative. In contrast, the post-hoc prediction criterion shows, on average, an increase in effect size for this signal at higher noise, indicating that it still provides useful predictive information even when the residual test becomes more conservative.

\subsection{Alternative feature subset} 
To assess the robustness of our results with respect to the choice of feature subsets used as observations, we repeat the informativeness tests from Section~\ref{sec:experiments_informativeness} using an alternative feature subset, with $\beta_{o} = 0.0$, while keeping the reward weights unchanged. This change in configuration also induces different signal generators, as we employ privileged signals of the form $i_t = (o_t, o_t^+)$. Comparing results across feature subsets allows us to evaluate the extent to which our findings depend on the particular choice of observable features.

Table~\ref{tab:ablation_feature_subset} summarizes the performance of different privileged signal generators under both informativeness criteria, averaged over ten independent runs. For this analysis, the agent is provided with observations $o_t=[s_t^1, s_t^3]$. In line with the previous experiments, signals that add little beyond the observation (e.g., $i_t = [s_t^1, s_t^3]$) are identified as uninformative by both criteria, with near-zero effect sizes and large $p$-values. Incorporating return-relevant components increases effect sizes. Under the post-hoc prediction criterion, signals containing strongly reward-relevant features (i.e., $s_t^4$ and combinations including $s_t^2$ and $s_t^4$) yield clear and statistically significant gains, reflected in larger positive $L_\tau$ and small $p$-values. This supports earlier findings that signals which include the most reward-informative state dimensions are consistently ranked highest. The residual-based criterion remains more conservative, as in the original observation setting. Although the dependency measure $\rho_{\text{obs}}$ generally increases when informative features are added, statistical significance at level $\alpha = 0.1$ is typically reached only for larger feature subsets (e.g., those containing both $s_t^2$ and $s_t^4$, or the full state $s_t$).

Overall, the relative ordering of privileged signal generators is largely preserved across observation choices: signals containing the most reward-relevant state components remain the most informative under both criteria, although in some cases, higher significance levels are required to reach statistical significance.

\begin{table*}[h!]
\caption{Comparison of candidate privileged signals using the residual-based informativeness criterion across ten independent runs on a randomly sampled informed POMDP environment with noisy observations.}
\label{tab:ablation_residual_noise_levels}
 \begin{center}
    \begin{small}
      \begin{sc}
\begin{tabular}{l cc cc}
\toprule
\multirow{2}{*}{Privileged Signal} 
& \multicolumn{2}{c}{\normalfont $\beta_{o} = 0.1$}
& \multicolumn{2}{c}{\normalfont $\beta_{o} = 0.5$} \\
\cmidrule(lr){2-3} \cmidrule(lr){4-5}
& \normalfont $\rho_{\text{obs}}$ (mean $\pm$ std) & \normalfont $p$-value (mean $\pm$ std)
& \normalfont $\rho_{\text{obs}}$ (mean $\pm$ std) &  \normalfont $p$-value (mean $\pm$ std) \\
\midrule
$i_t = [s_t^1, s_t^2]$ & \normalfont 3.2e-05 $\pm$ 1.2e-05 & \normalfont 0.406 $\pm$ 0.280 & \normalfont 3.8e-05 $\pm$ 1.8e-05 & \normalfont 0.380 $\pm$ 0.321 \\$i_t = [s_t^1, s_t^2, s_t^3]$ & \normalfont 8.4e-05 $\pm$ 3.5e-05 & \normalfont 0.069 $\pm$ 0.120 & \normalfont 8.7e-05 $\pm$ 3.9e-05 & \normalfont 0.105 $\pm$ 0.223 \\$i_t = [s_t^1, s_t^2, s_t^4]$ & \normalfont 1.2e-04 $\pm$ 3.7e-05 & \normalfont 0.011 $\pm$ 0.003 & \normalfont 1.3e-04 $\pm$ 3.8e-05 & \normalfont 0.010 $\pm$ 0.000 \\$i_t = [s_t^1, s_t^2, s_t^5]$ & \normalfont 4.2e-05 $\pm$ 1.5e-05 & \normalfont 0.262 $\pm$ 0.303 & \normalfont 7.0e-05 $\pm$ 4.0e-05 & \normalfont 0.088 $\pm$ 0.087 \\$i_t = [s_t^1, s_t^2, s_t^3, s_t^4]$ & \normalfont 1.1e-04 $\pm$ 3.7e-05 & \normalfont 0.018 $\pm$ 0.017 & \normalfont 1.2e-04 $\pm$ 3.7e-05 & \normalfont 0.012 $\pm$ 0.006 \\$i_t = [s_t^1, s_t^2, s_t^3, s_t^5]$ & \normalfont 7.2e-05 $\pm$ 2.7e-05 & \normalfont 0.061 $\pm$ 0.065 & \normalfont 8.2e-05 $\pm$ 3.5e-05 & \normalfont 0.043 $\pm$ 0.062 \\$i_t = [s_t^1, s_t^2, s_t^4, s_t^5]$ & \normalfont 9.8e-05 $\pm$ 2.8e-05 & \normalfont 0.015 $\pm$ 0.013 & \normalfont 1.2e-04 $\pm$ 4.0e-05 & \normalfont 0.012 $\pm$ 0.006 \\$i_t = [s_t^1, s_t^2, s_t^3, s_t^4, s_t^5]$ & \normalfont 9.5e-05 $\pm$ 3.0e-05 & \normalfont 0.013 $\pm$ 0.009 & \normalfont 1.1e-04 $\pm$ 3.5e-05 & \normalfont 0.013 $\pm$ 0.007 \\
\bottomrule
\end{tabular}
      \end{sc}
    \end{small}
  \end{center}
\end{table*}

\begin{table*}[h!]
\caption{Comparison of candidate privileged signals using the post-hoc prediction informativeness criterion across ten independent runs on a randomly sampled informed POMDP environment with noisy observations.}
\label{tab:ablation_post-hoc_noise_levels}
 \begin{center}
    \begin{small}
      \begin{sc}
\begin{tabular}{l cc cc}
\toprule
\multirow{2}{*}{Privileged Signal} 
& \multicolumn{2}{c}{\normalfont $\beta_{o} = 0.1$}
& \multicolumn{2}{c}{\normalfont $\beta_{o} = 0.5$} \\
\cmidrule(lr){2-3} \cmidrule(lr){4-5}
& \normalfont $L_{\tau}$ (mean $\pm$ std) & \normalfont $p$-value (mean $\pm$ std)
& \normalfont $L_{\tau}$ (mean $\pm$ std) &  \normalfont $p$-value (mean $\pm$ std) \\
\midrule
$i_t = [s_t^1, s_t^2]$ & \normalfont -0.013 $\pm$ 0.014 & \normalfont 0.788 $\pm$ 0.343 & \normalfont 0.004 $\pm$ 0.012 & \normalfont 0.456 $\pm$ 0.298 \\$i_t = [s_t^1, s_t^2, s_t^3]$ & \normalfont 0.007 $\pm$ 0.016 & \normalfont 0.384 $\pm$ 0.330 & \normalfont 0.021 $\pm$ 0.014 & \normalfont 0.081 $\pm$ 0.084 \\$i_t = [s_t^1, s_t^2, s_t^4]$ & \normalfont 0.035 $\pm$ 0.018 & \normalfont 0.041 $\pm$ 0.071 & \normalfont 0.049 $\pm$ 0.017 & \normalfont 0.001 $\pm$ 0.002 \\$i_t = [s_t^1, s_t^2, s_t^5]$ & \normalfont 0.019 $\pm$ 0.019 & \normalfont 0.219 $\pm$ 0.231 & \normalfont 0.036 $\pm$ 0.017 & \normalfont 0.030 $\pm$ 0.031 \\$i_t = [s_t^1, s_t^2, s_t^3, s_t^4]$ & \normalfont 0.046 $\pm$ 0.018 & \normalfont 0.009 $\pm$ 0.016 & \normalfont 0.057 $\pm$ 0.017 & \normalfont 3.2e-04 $\pm$ 5.9e-04 \\$i_t = [s_t^1, s_t^2, s_t^3, s_t^5]$ & \normalfont 0.027 $\pm$ 0.019 & \normalfont 0.110 $\pm$ 0.137 & \normalfont 0.043 $\pm$ 0.017 & \normalfont 0.010 $\pm$ 0.011 \\$i_t = [s_t^1, s_t^2, s_t^4, s_t^5]$ & \normalfont 0.064 $\pm$ 0.020 & \normalfont 0.003 $\pm$ 0.006 & \normalfont 0.077 $\pm$ 0.018 & \normalfont 1.5e-04 $\pm$ 3.3e-04 \\$i_t = [s_t^1, s_t^2, s_t^3, s_t^4, s_t^5]$ & \normalfont 0.056 $\pm$ 0.032 & \normalfont 0.073 $\pm$ 0.131 & \normalfont 0.068 $\pm$ 0.030 & \normalfont 0.022 $\pm$ 0.041 \\
\bottomrule
\end{tabular}
      \end{sc}
    \end{small}
  \end{center}
\end{table*}

\begin{table*}[h!]
\caption{Comparison of candidate privileged signals using residual-based and post-hoc prediction informativeness criteria across ten independent runs with observations $o_t=[s_t^1, s_t^3]$.}
\label{tab:ablation_feature_subset}
 \begin{center}
    \begin{small}
      \begin{sc}
\begin{tabular}{l c c c c c c}
    \toprule
    \multirow{2}{*}{Privileged signal $i_t$}
    & & \multicolumn{2}{c}{Residual Informativeness} & & \multicolumn{2}{c}{Prediction Informativeness} \\
    \cmidrule(lr){3-4} \cmidrule(lr){6-7}
    & & \normalfont $\rho_\text{obs}$ (mean $\pm$ std)
    & \normalfont $p$-value (mean $\pm$ std)
    & 
    & \normalfont $L_{\tau}$ (mean $\pm$ std)
    & \normalfont $p$-value (mean $\pm$ std) \\
    \midrule
 
    $i_t = [s_t^1, s_t^3]$ & & \normalfont 2.4e-05 $\pm$ 6.6e-06 & 0.673 $\pm$ 0.273 && \normalfont -1.9e-02 $\pm$ 0.010 & \normalfont 0.973 $\pm$ 0.041 \\
     
    $i_t = [s_t^1, s_t^2, s_t^3]$ & & \normalfont 4.9e-05 $\pm$ 1.8e-05 & 0.240 $\pm$ 0.315 && \normalfont 0.023 $\pm$ 0.010 & \normalfont 0.052 $\pm$ 0.099 \\
     
    $i_t = [s_t^1, s_t^3, s_t^4]$ & & \normalfont 4.5e-05 $\pm$ 1.6e-05 & 0.314 $\pm$ 0.329 && \normalfont 0.008 $\pm$ 0.015 & \normalfont 0.307 $\pm$ 0.345 \\
     
    $i_t = [s_t^1, s_t^3, s_t^5]$ & & \normalfont 5.8e-05 $\pm$ 4.7e-05 & 0.307 $\pm$ 0.275 && \normalfont 0.004 $\pm$ 0.014 & \normalfont 0.390 $\pm$ 0.342 \\
     
    $i_t = [s_t^1, s_t^2, s_t^3, s_t^4]$ & & \normalfont 6.2e-05 $\pm$ 1.6e-05 & 0.057 $\pm$ 0.082 && \normalfont 0.057 $\pm$ 0.012 & \normalfont 0.003 $\pm$ 0.008 \\
     
    $i_t = [s_t^1, s_t^2, s_t^3, s_t^5]$ & & \normalfont 5.4e-05 $\pm$ 3.3e-05 & 0.240 $\pm$ 0.279 && \normalfont 0.039 $\pm$ 0.012 & \normalfont 0.016 $\pm$ 0.031 \\
     
    $i_t = [s_t^1, s_t^3, s_t^4, s_t^5]$ & & \normalfont 6.8e-05 $\pm$ 3.8e-05 & 0.080 $\pm$ 0.076 && \normalfont 0.031 $\pm$ 0.018 & \normalfont 0.107 $\pm$ 0.192 \\
     
    $i_t = [s_t^1, s_t^2, s_t^3, s_t^4, s_t^5]$ & & \normalfont 6.3e-05 $\pm$ 3.0e-05 & 0.071 $\pm$ 0.102 && \normalfont 0.068 $\pm$ 0.019 & \normalfont 0.006 $\pm$ 0.013 \\
    
    \bottomrule
\end{tabular}
      \end{sc}
    \end{small}
  \end{center}
\end{table*}


\end{document}